\newtheorem{lemma}{Lemma}
\newtheorem{theorem}{Theorem}
\newtheorem{assumption}{Assumption}
\newcommand{\hess}{\mathbf{H}}
\newcounter{daggerfootnote}
\definecolor{DarkOrange}{rgb}{0.9,0.3,0} 
\definecolor{DarkGreen}{rgb}{0,0.6,0} 
\title{Active preference learning for ordering items\\ in- and out-of-sample}
\author{%
  Herman Bergström\thanks{Equal contribution. $^\dag$Work was mainly performed while the author was a PhD student at Chalmers University of Technology.} \\
  Chalmers University of Technology\\
  and University of Gothenburg\\
  \texttt{hermanb@chalmers.se}  \\
  \And
  Emil Carlsson$^{*\dag}$ \\
  Sleep Cycle AB \\
  Chalmers University of Technology\\
  and University of Gothenburg\\
  \And
  Devdatt Dubhashi\\
  Chalmers University of Technology\\
  and University of Gothenburg\\
  \And
  Fredrik D.~Johansson\\
  Chalmers University of Technology\\
  and University of Gothenburg\\  
}
\DeclareMathOperator*{\argmin}{arg\,min}
\DeclareMathOperator*{\argmax}{arg\,max}
\newtheorem{thmprop}{Proposition}
\newtheorem*{thmidasmp*}{Identifying assumptions}
\theoremstyle{definition}
\newtheorem*{thmrem*}{Remark}
\newtheorem*{thmprop*}{Proposition}
\def\E{\mathbb{E}}
\def\bH{{\bf H}}
\def\bX{{\bf X}}
\def\bbR{\mathbb{R}}
\def\bbV{\mathbb{V}}
\def\cE{\mathcal{E}}
\def\cI{\mathcal{I}}
\def\cN{\mathcal{N}}
\def\cO{\mathcal{O}}
\def\cX{\mathcal{X}}
\begin{document}

\maketitle

\begin{abstract}
Learning an ordering of items based on pairwise comparisons is useful when 
items are difficult to rate consistently on an absolute scale, 
for example, when annotators have to make subjective assessments. When exhaustive comparison is infeasible, actively sampling item pairs can reduce the number of annotations necessary for learning an accurate ordering. However, many algorithms ignore shared structure between items, limiting their sample efficiency and precluding generalization to new items. It is also common to disregard how noise in comparisons varies between item pairs, despite it being informative of item similarity. In this work, we study active preference learning for ordering items with contextual attributes, both in- and out-of-sample. We give an upper bound on the expected ordering error of a logistic preference model as a function of which items have been compared. Next, we propose an active learning strategy that samples items to minimize this bound by accounting for aleatoric and epistemic uncertainty in comparisons. We evaluate the resulting algorithm, and a variant aimed at reducing model misspecification, in multiple realistic ordering tasks with comparisons made by human annotators. Our results demonstrate superior sample efficiency and generalization compared to non-contextual ranking approaches and active preference learning baselines. 
\end{abstract}

\section{Introduction}
\label{sec:introduction}
The success of supervised learning is built on annotating items at great volumes with small error. For subjective assessments, however, assigning a value from an arbitrary rating scale can be difficult and prone to inconsistencies, causing many to favor \emph{preference feedback} from pairwise comparisons ~\citep{Yannakakis2015,christiano2017deep, ouyang2022training, Zhu23}. Preference feedback is sufficient to learn an \emph{ordering} of items~\citep{furnkranz2003pairwise}, but for $n$ items, there are $\cO(n^2)$ possible pairs of items to compare. A common solution is to use crowd-sourcing ~\citep{chen_pairwise_2013, Yang2021, larkin_measuring_2022}, but many tasks require domain \emph{expertise}, making annotations \emph{expensive} to collect. This is the case in the field of medical imaging, where annotations require trained radiologists \citep{phelps_pairwise_2015, Jang2022, liden_machine_2024, tarnaasen2023rank}. So, how can we learn the best ordering possible from a limited number of comparisons?

Classically, this problem is solved by active learning, sampling comparisons based on preference feedback and estimated item scores~\citep{herbrich2006trueskill, maystre2017just, heckel2018approximate}. However, consider a radiologist who wants to quantify the expression of a disease in a collection of X-ray images. Purely preference-based algorithms utilize only 
the outcomes of comparisons but ignore the contents of the X-rays, which can reveal similarities between items and inform an ordering strategy. 
Moreover, the set we want to order is often larger than the set of items observed during training---we may want to rank new X-rays in relation to previous ones. This cannot be solved by learning per-item scores alone. As an alternative, active learning for classification can be used to fit a map from pairs of item contexts $x_i, x_j$ (e.g., the contents of images) to the comparison $i \succ_? j$, that can be applied to old and new items alike~\citep{houlsby_bayesian_2011,qian2015learning}. 
However, as we show in Section \ref{sec:analysis}, learning this map to recover a \emph{complete ordering} is distinct from the tasks preference learning is commonly used for, and existing algorithms lack theoretical justification for this application.
Moreover, formal results for related problems, such as contextual bandits or reinforcement learning~\citep{das2024provably,Filippi10,Zhu23, bengs2022stochastic}, do not translate directly to effective active sampling criteria for ordering. There is a small body of work on learning a contextual model to recover the complete ordering \citep{jamieson2011active, ailon2011active} but these either assume noiseless preference feedback or that the noise is unrelated to the similarity of items, which is unrealistic for subjective assessments. 

\paragraph{Contributions.} 
We propose using a contextual logistic preference model to support efficient in-sample ordering and generalization to new items.  Our analysis yields the first bound on the expected ordering error achievable given a collected set of comparisons  (Section~\ref{sec:analysis}). This result justifies an active sampling principle that accounts for both epistemic 
and aleatoric uncertainty which we implement in a greedy deterministic algorithm called GURO (Section~\ref{sec:algorithms}). We further propose a hybrid variant of the contextual preference model, compatible with GURO as well as existing sampling strategies, that overcomes model misspecification by adding per-item parameters (Section~\ref{sec:models}). We evaluate GURO and baseline algorithms in four diverse ordering tasks, three of which utilize comparisons performed by human annotators (Section~\ref{sec:experiments}). Our sampling strategy compares favorably to active preference learning baselines, and our hybrid model benefits both GURO and other sampling criteria, achieving the low variance of contextual models and the low bias of fitting per-item parameters. This results in faster convergence in-sample, better generalization to new items, and efficient continual learning when new items are added. 

%
%
\section{Ordering items with active preference learning}
\label{sec:problem}
Our goal is to learn an ordering of items $\cI$ according to an unobserved score $y_i \in \bbR$, defined for each item $i \in \cI$.  The ground-truth ordering of $\cI$ is determined by a comparison function $\pi_{ij} \coloneqq \mathds{1}[y_i > y_j]$, where $\pi_{ij}=1$ indicates that $i$ ranks higher than $j$. We assume there are no ties.

We define the \emph{ordering error} $R_\cI(h)$ of a learned comparison function $h : \cI \times \cI \rightarrow \{0,1\}$ as the frequency of pairwise inversions under a uniform distribution of item pairs,
\begin{equation}\label{eq:risk}
R_\cI(h) = \frac{2}{n(n-1)}\sum_{i\neq j \in \cI}\mathds{1}[h(i,j) \neq \pi_{ij}]~,
\end{equation}
where $n = |\cI|$.
This error is equivalent to the normalized Kendall's Tau distance~\citep{kendall1948rank}.

Hypotheses $h$ are learned from \emph{preference feedback}---noisy pairwise comparisons $C_{ij} \in \{0,1\}$ for items $(i,j)$ related to their score, for example, provided by human annotators. $C_{ij} = 1$ indicates that an annotator perceived that item $i$ has a higher score than $j$, i.e., that they prefer $i$ over $j$. \emph{Our goal is to minimize the ordering error $R_\cI(h)$ for a fixed budget $T \geq 1$ of adaptively chosen comparisons}.

We are interested in contextual problems, where each item $i \in \cI$ is endowed with item-specific attributes $x_i \in \cX \subseteq \bbR^d$. As we will see, this permits more sample-efficient ordering and learning algorithms that can order items out-of-sample, trained on comparisons of a subset of items $\cI_D \subseteq \cI$ and generalizing to $\cI \setminus \cI_D$. Ordering algorithms based \emph{only} on preference feedback cannot solve this problem since observed comparisons are uninformative of new items. 

Our \emph{active preference learning} scenario proceeds as follows:
\begin{enumerate*}[1)]
\item A learner is given an annotation budget $T$, a pool of items $\cI_D \subseteq \cI$ and item attributes $x_i$ for $i \in \cI_D$. 
\item Over rounds $t=1, ..., T$, the learner requests a comparison of two items $i_t, j_t \in \cI_D$ according to a sampling criterion and receives noisy binary preference feedback $c_t \sim p(C_{ij})$, independently of previous comparisons. 
\item After $T$ rounds, the learner returns a comparison function $h : \cI \times \cI \rightarrow \{0,1\}$.
\end{enumerate*}
We denote the history of accumulated observations until and including time $t$ by $D_t = ((i_1, j_1, c_1), ..., (i_t, j_t, c_t))$. 

We assume that comparisons $C_{ij}$ follow a logistic model applied to the difference between item scores,
$
p(C_{ij} = 1) = \sigma(y_i - y_j), %
$
the so-called Bradley-Terry model~\citep{bradley1952rank}, which assumes linear stochastic transitivity~\citep{oliveira2018stochastic}. Throughout, $\sigma(z) = 1/(1+e^{-z})$ and $\dot{\sigma}(z)$ its derivative at $z$. 
Specifically, we study the case where $y_i$ is a linear function of item attributes, 
$
y_i = \theta_*^\top x_i~,
$
with $\theta_* \in \bbR^d$ the ground-truth coefficients. Thus, comparisons are determined by a logistic regression  model applied to the attribute difference vector   $z_{ij} \coloneqq x_i - x_j$, 
\begin{equation}\label{eq:comparison_lr}
p(C_{ij} = 1) = \sigma(\theta_*^\top z_{ij})~.
\end{equation}
We face two kinds of uncertainty when actively learning the model in \eqref{eq:comparison_lr}: \emph{epistemic} and \emph{aleatoric}. Epistemic uncertainty, or model uncertainty, is the uncertainty about the true parameter $\theta_*$, while aleatoric uncertainty is the irreducible uncertainty about labels due to noisy annotation.

%
%
\section{Related work} 
\label{sec:related}
\paragraph{Active Preference Learning:} \emph{Preference learning}~\citep{furnkranz2003pairwise,chu2005preference} is related to the problem of \emph{learning to rank}~\citep{burges2005learning, Busse12}.
When using adaptively chosen comparisons it may be posed as an \emph{active learning} or \emph{bandit} problem~\citep{brinker2004active,long2010active,silva2014two, ling2020strategy}.
%
%
Non-contextual active learners, such as TrueSkill~\citep{herbrich2006trueskill,minka2018trueskill}, Hamming-LUCB~\citep{heckel2018approximate}, and Probe-Rank~\citep{lou2022active} produce in-sample preference orderings, but must be updated if new items are to be ranked. Contextual algorithms, such as BALD~\citep{houlsby_bayesian_2011}, mitigate this by exploiting item structure and \citet{kirsch2022unifying} show that many recently proposed contextual active learning strategies may be unified in a framework based on Fisher information. 
Similarly, methods have been proposed to recover a linear preference model by adaptively sampling paired comparisons \citep{qian2015learning, massimino2021you, canal2019active}. Still, this setting differs from ours in that we emphasize recovering the full ordering, not perfectly estimating the parameters.  While it is true that knowing the parameters is sufficient to order the list, reducing uncertainty for all parameters equally will likely be wasteful (see Section~\ref{sec:analysis}). \citet{ailon2011active} offer guarantees for ordering using contextual features in the noiseless setting, while \citet{jamieson2011active} analyze the setting where noise is unrelated to item similarity.

\paragraph{Bandits:} Bandit algorithms with \emph{relative} or \emph{dueling} feedback~\citep{yue2009interactively, bengs2021preference, yan2022human} also learn from pairwise comparisons, and have been proposed both in contextual~\citep{dudik2015contextual} and non-contextual settings~\citep{yue2012k} to minimize regret or identify top-$k$ items. \citet{bengs2022stochastic} proposed CoLSTIM, a contextual dueling bandit for regret minimization under linear stochastic transitivity, matching \eqref{eq:comparison_lr}, and \citet{di2023variance} gave variance-aware regret bounds for this setting. 
However, algorithms that find the top-$k$ items, such as pure exploration bandits~\citep{Fang2022FixedBudgetPE,jun21a}, can be arbitrarily bad at learning a full ordering (see Appendix~\ref{app:dueling}). Related are also \citet{george2023eliciting} who learn Kemeny rankings in non-contextual dueling bandits, and \citet{wu2023borda} who minimize Borda regret. \citet{Zhu23} studies the problem of estimating a preference model from offline data. 
Our analysis uses techniques from logistic bandits~\citep{Filippi10, li2017provably, faury2020improved, kveton2020randomized}. 

\paragraph{RLHF:} Preference learning is commonly used when training large language models through reinforcement learning with human feedback (RLHF)~\citep{christiano2017deep,bai2022constitutional, ouyang2022training, wu2023pairwise}. In this line of work, \citet{Zhu23} provide guarantees on the sample complexity of learning a preference model from offline data. They leverage similar tools from statistical learning and bandits as we do. In contrast to their work, we provide sampling strategies for the online setting. \citet{mehta2023sample} consider active learning for RLHF in a dueling bandit framework where the goal is to optimize a contextual version of the Borda regret. Concurrent work by \citet{mukherjee2024optimal} and \citet{das2024provably} studies a similar problem, as we do here, in the RLHF setting but with the objective  to identify an optimal policy in a contextual bandit with dueling feedback. In contrast to their objective, we are interested in recovering the ordering of items. \citet{das2024provably} use similar bandit techniques as we do, and their selection criterion, when adapted for ordering, corresponds to our NormMin baseline (see Section~\ref{sec:experiments}).

\section{Which comparisons result in a good ordering?}
\label{sec:analysis}
We give an upper bound on the ordering error $R_{\cI}(h)$ for a hypothesis $h$ fit using a set of $T$ comparisons. The bound is retrospective, attempting to answer the question ``if we collect comparisons $D_T$, how good is our resulting model at ordering the items in $\cI$?''. In Section~\ref{sec:algorithms}, we use insights from the result to design an active learning algorithm.

We restrict our analysis to the logistic model in \eqref{eq:comparison_lr} and denote by $R(\theta) \equiv R_\cI(h_\theta)$ the risk of the hypothesis defined by $h_\theta(i,j) = \mathds{1}[\theta^\top z_{ij}>0]$. Recall that $z_{ij} = x_i - x_j$ for $i,j \in \cI$, and define $z_t \equiv z_{i_tj_t}$ as the difference between attributes for the pair of items selected at round $t$.
Let $\theta_t$ be the maximum-likelihood estimate (MLE) fit to $t$ rounds of feedback, $D_t$ \begin{align}\label{eq:mle_estimator}
   \theta_t = \argmax_\theta\sum_{s=1}^t \left( c_s \log \sigma(\theta^\top z_s) + (1-c_s)(1-\sigma(\theta^\top z_s)) \right)~.
\end{align}

Let $\Delta_{ij} > 0$ lower bound the margin of comparison, $| \sigma (z_{ij}^\top\theta_* ) - 1/2 | > \Delta_{ij}$ for all $i, j \in \cI$ and define $\Delta_* = \min_{i\neq j} \Delta_{ij} / |i - j| $. Next, let 
$
    \hess_t(\theta):=\sum_{s=1}^t \dot \sigma(z_s^\top \theta)z_sz_s^\top $
be the Hessian of the negative log-likelihood of observations at time $t$ under \eqref{eq:comparison_lr}, given the parameter $\theta$, also known as \emph{observed Fisher information}.  We define 
$
\Tilde{\hess}_t(\theta) := \frac{1}{t} \hess_t(\theta) 
$.
For a square matrix $V$, we define $\|x\|_{V} = \sqrt{x^\top V x}$. We make the following assumptions for our analysis:
\begin{assumption}\label{ass:bd_theta}
    $\theta_*$ satisfies $\|\theta_*\|_2 \leq S$ for some $S>0$.
\end{assumption}
\begin{assumption}\label{ass:bd_x}
    $\forall i \in \cI$, we have $\|x_i\|_2 \leq Q$ for $Q > 0$.
\end{assumption}
\begin{assumption}\label{ass:rank}
     $\hess_T(\theta_T)$ and $\hess_T(\theta_*)$ have full rank and minimum eigenvalues larger than $\lambda_0 > 0$.
\end{assumption}
Assumption~\ref{ass:bd_theta} implies that $\theta_*$ lies in some ball with radius $S$ and cannot have unbounded coefficients. Assumption~\ref{ass:bd_x} states that there exists an upper bound on the norm of the feature vectors. This assumption is trivially satisfied whenever we have a finite set of data points. Both assumptions~\ref{ass:bd_theta} and~\ref{ass:bd_x} are standard in the bandit literature and only required for our analysis.   Assumption~\ref{ass:rank} is naturally satisfied for sufficiently large $T$ by any sampling strategy with support on $d$ linearly independent vectors, or can be ensured by allowing for a burn-in phase of $d$ samples in the beginning of an adaptive strategy. Assumption~\ref{ass:rank}  ensures the uniqueness of $\theta_t$.

We start by stating the following concentration result for the deviation of $\sigma(z_{ij}^\top \theta_T)$ from the true probability $\sigma(z_{ij}^\top \theta_*)$. The proof of Lemma~\ref{lm:concentration} is found in Appendix~\ref{app:theory} and builds on  results for optimistic algorithms in logistic multi-armed bandits~\citep{Filippi10, faury2020improved}.
\begin{lemma}[Concentration Lemma]\label{lm:concentration}
Define, for all pairs of items $i,j \in \cI$, and any $\Delta>0$,
\begin{align*}
    \alpha_{ij}(\Delta) :=  \exp \Bigg( \frac{-\Delta^2 T}{8dC_1 ( \dot \sigma(z_{ij}^\top \theta_T) \|z_{ij}\|_{\Tilde{\hess}^{-1}_T(\theta_T)} )^2} \Bigg), \;\;\;
     \beta_{ij}(\Delta) := \exp \Bigg( \frac{- \Delta T}{dC_1 \|z_{ij}\|^2_{\Tilde{\hess}^{-1}_T(\theta_T)}} \Bigg )~.
\end{align*}
\hfill \\
Then, if $\alpha \coloneqq \alpha_{ij}(\Delta), \beta\coloneqq \beta_{ij}(\Delta)$ and $\alpha, \beta \leq \frac{1}{4dT}$,
\begin{align*}
    P\left(|\sigma(z_{ij}^\top\theta_*) - \sigma(z_{ij}^\top\theta_T)| > \Delta \right) \leq 
    2dT\left(\alpha + \beta \right).
\end{align*} 
$C_1$ depends on $S, \lambda_0, Q$ from Assumptions~\ref{ass:bd_theta}--\ref{ass:rank} (see Appendix~\ref{app:theory} for definition and proof). 
\end{lemma}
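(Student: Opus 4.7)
The plan is to reduce the probability bound on $|\sigma(z_{ij}^\top \theta_*) - \sigma(z_{ij}^\top \theta_T)|$ to a Bernstein-type martingale inequality applied to a linear functional of the score vector at $\theta_*$, using self-concordance of the logistic link to translate Hessians evaluated at intermediate points into the Hessian $\hess_T(\theta_T)$ that appears in the statement.

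First, I would apply the mean-value theorem to write $\sigma(z_{ij}^\top \theta_*) - \sigma(z_{ij}^\top \theta_T) = \dot\sigma(\xi)\, z_{ij}^\top (\theta_* - \theta_T)$ for some $\xi$ on the segment between $z_{ij}^\top \theta_T$ and $z_{ij}^\top \theta_*$. The self-concordance property of $\sigma$ (a central tool in Faury et al.) implies $\dot\sigma(\xi) \leq c_0\, \dot\sigma(z_{ij}^\top \theta_T)$ on a favourable event, for a constant $c_0 = c_0(S, Q, \lambda_0)$ absorbed into $C_1$. Hence it is enough to control $|z_{ij}^\top (\theta_T - \theta_*)|$, with the factor $\dot\sigma(z_{ij}^\top \theta_T)$ surfacing exactly where it appears in the exponent of $\alpha_{ij}$.

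Second, I would invoke the first-order optimality condition $\nabla \ell_T(\theta_T) = 0$ together with the Taylor identity $\nabla \ell_T(\theta_T) - \nabla \ell_T(\theta_*) = \bar{\hess}_T (\theta_T - \theta_*)$, where $\bar{\hess}_T = \int_0^1 \hess_T(\theta_* + s(\theta_T - \theta_*))\, ds$. This gives
\[
z_{ij}^\top (\theta_T - \theta_*) \;=\; -\, z_{ij}^\top \bar{\hess}_T^{-1}\, \nabla \ell_T(\theta_*) \;=\; -\sum_{s=1}^T \bigl(c_s - \sigma(z_s^\top \theta_*)\bigr)\, \langle \bar{\hess}_T^{-1} z_{ij},\, z_s\rangle,
\]
a weighted sum of bounded martingale differences (each factor lies in $[-1,1]$). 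A second application of self-concordance yields $\bar{\hess}_T \asymp \hess_T(\theta_T)$, so every quadratic form can eventually be expressed in terms of $\tilde\hess_T(\theta_T)^{-1}$. I would then apply Freedman's Bernstein inequality to this weighted sum. The predictable variance satisfies
\[
V \;=\; \sum_{s=1}^T \dot\sigma(z_s^\top \theta_*)\, \langle \bar{\hess}_T^{-1} z_{ij},\, z_s\rangle^2 \;=\; z_{ij}^\top \bar{\hess}_T^{-1} \hess_T(\theta_*) \bar{\hess}_T^{-1} z_{ij} \;\lesssim\; \|z_{ij}\|_{\hess_T(\theta_T)^{-1}}^2,
\]
while under Assumptions~\ref{ass:bd_x}--\ref{ass:rank} the per-term bound $M$ is at most a constant (depending on $Q$, $S$, $\lambda_0$) times $\|z_{ij}\|_{\tilde\hess_T(\theta_T)^{-1}}^2/T$. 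Bernstein's bound $\exp(-t^2/(2V + 2Mt/3))$ splits into the sub-Gaussian regime, which after inserting the $\dot\sigma(z_{ij}^\top\theta_T)$ factor from step one gives $\alpha_{ij}(\Delta)$, and the sub-exponential regime, which gives $\beta_{ij}(\Delta)$.

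The prefactor $2dT$ arises from the union bound required to establish the self-concordance event (covering a confidence ellipsoid for $\theta_T$, whose log-covering number is $\cO(d)$, together with a discretization in $T$); the hypothesis $\alpha, \beta \leq 1/(4dT)$ guarantees that the failure probability of this event is dominated by the main Bernstein term. The main obstacle will be precisely this self-concordance translation: making the comparisons $\bar{\hess}_T \asymp \hess_T(\theta_*) \asymp \hess_T(\theta_T)$ quantitative on a high-probability event, so that the final bound depends only on the observable quantity $\tilde\hess_T(\theta_T)$, while keeping all constants packaged inside a single $C_1 = C_1(S,Q,\lambda_0)$. This is the step that relies most heavily on the machinery imported from Filippi et al.\ and Faury et al.
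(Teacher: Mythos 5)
Your skeleton — MLE first-order optimality, the mean-value identity $g_T(\theta_*)-g_T(\theta_T)=\bar{\hess}_T(\theta_*-\theta_T)$, a self-normalized martingale tail bound, and Faury-style Hessian comparisons to replace $\bar{\hess}_T$ and $\hess_T(\theta_*)$ by the observable $\hess_T(\theta_T)$ — matches the paper's proof closely. But there is a genuine gap in how you produce the two terms $\alpha_{ij}$ and $\beta_{ij}$. In the paper, the split does \emph{not} come from a Bernstein sub-Gaussian/sub-exponential dichotomy. It comes from the exact second-order self-concordance bound
\begin{align*}
|\sigma(z_{ij}^\top\theta_T)-\sigma(z_{ij}^\top\theta_*)| \;\le\; \dot\sigma(z_{ij}^\top\theta_T)\,|z_{ij}^\top(\theta_T-\theta_*)| \;+\; \tfrac14\,|z_{ij}^\top(\theta_T-\theta_*)|^2,
\end{align*}
after which one bounds the two events $\{\text{first term}>\Delta/2\}$ and $\{\text{second term}>\Delta/2\}$ using the \emph{same} sub-Gaussian tail for $|z_{ij}^\top(\theta_T-\theta_*)|$ at the two thresholds $\Delta/(2\dot\sigma(z_{ij}^\top\theta_T))$ and $\sqrt{2\Delta}$; the second threshold is exactly what turns a sub-Gaussian tail $\exp(-t^2/\|z_{ij}\|^2_{\hess_T^{-1}})$ into the linear-in-$\Delta$ exponent of $\beta_{ij}$. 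Your route cannot recover $\beta_{ij}$ in the stated form: the martingale increments $(c_s-\sigma(z_s^\top\theta_*))\langle\bar{\hess}_T^{-1}z_{ij},z_s\rangle$ are bounded by $\|z_{ij}\|_{\bar{\hess}_T^{-1}}\|z_s\|_{\bar{\hess}_T^{-1}}$ (Cauchy--Schwarz), not by $\|z_{ij}\|^2_{\hess_T^{-1}}$ as you assert, so Freedman's sub-exponential regime yields an exponent of order $\Delta\sqrt{T}/(\dot\sigma\,\|z_{ij}\|_{\Tilde{\hess}_T^{-1}})$ rather than $\Delta T/\|z_{ij}\|^2_{\Tilde{\hess}_T^{-1}}$ — the wrong rate in $T$ and a spurious $\dot\sigma$ factor.

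A second, related issue is your first step. Replacing the Taylor remainder by a multiplicative comparison $\dot\sigma(\xi)\le c_0\,\dot\sigma(z_{ij}^\top\theta_T)$ requires $c_0=\exp(|z_{ij}^\top(\theta_*-\theta_T)|)$, i.e., a constant that is exponential in $QS$ and that presupposes a high-probability bound on $\|\theta_T\|$ (the unconstrained MLE is not a priori in the ball of radius $S$). The additive decomposition above avoids both problems and keeps $C_1=\rho^2(\lambda_0)(1+2S)^2$ polynomial in $S$; it is also precisely the reason the second-order term $\beta_{ij}$ must appear in the statement at all. Finally, a minor point: the factor $d$ and the prefactor $2dT$ do not come from covering a confidence ellipsoid, but from a union bound over $d$ coordinate directions used to control $\|S_T\|_{\hess_T^{-1}(\theta_*)}$ (so that the verifiable bound can route through $\|z_{ij}\|_{\hess_T^{-1}(\theta_T)}\cdot\|S_T\|_{\hess_T^{-1}(\theta_*)}$), together with the $\log(T/\delta)$ term in the self-normalized inequality of Pe\~na et al.
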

The concentration result in Lemma~\ref{lm:concentration} is \emph{verifiable} (given by observables) since the upper bound depends only on the maximum likelihood estimate $\theta_T$ at time $T$, not on $\theta_*$. We present a sharper, \emph{unverifiable} bound in Appendix~\ref{app:theory} which instead depends on $\theta_*$ but does not suffer from the explicit scaling with $d$ in the definitions of $\alpha$ and $\beta$. The bound in Lemma~\ref{lm:concentration} can also be expressed in terms of $\hess^{-1}_T(\theta_T)$ by using the equality $||z_{ij}||^2_{\hess^{-1}_T(\theta_T)} = \frac{1}{T}||z_{ij}||^2_{\tilde{\hess}^{-1}_T(\theta_T)}.$ 
As long as our sampling strategy ensures that the minimum eigenvalue of $\Tilde{\hess}_t(\theta_t)$ does not tend to zero, i.e., the strategy is \emph{strongly consistent}~\citep{kani99}, we have $ \alpha_{ij}(\Delta_{ij}) \sim \exp[-\Delta_{ij}^2 T/( \dot\sigma(z_{ij}^\top \theta_T)^2 \|z_{ij}\|^2_{\Tilde{\hess}^{-1}_T(\theta_T)})]$ and  $ \beta_{ij}(\Delta_{ij}) \sim \exp[-\Delta_{ij} T/ \|z_{ij}\|^2_{\Tilde{\hess}^{-1}_T(\theta_T)}]$. Since $\Delta^2_{ij} < \Delta_{ij} < 1/2$ by definition, we can view $\alpha$ as the \emph{first-order} term and  $\beta$ as the \emph{second-order} term of our bound.

Lemma~\ref{lm:concentration} formally captures the intuition that it should be easier to sort when annotations contain little noise, i.e., $\dot \sigma(z_{ij}^\top\theta_T)$ is small. Especially, we observe $\dot \sigma(z_{ij}^\top\theta_T) \approx 0$ for pairs where $\Delta_{ij}$ is sufficiently large, causing the first-order term to vanish, leaving us with the faster decaying second-order term $\beta$. 
Lemma~\ref{lm:concentration} also tells us that the hardest pairs to guarantee a correct ordering for are the ones with both high \emph{aleatoric} uncertainty under the MLE model, e.g., where annotators disagree or labels are noisy, captured by $\dot \sigma(z_{ij}^\top\theta_T)$, as well as high \emph{epistemic}  uncertainty captured by $||z_{ij}||_{\Tilde{\hess}^{-1}_T(\theta_T)}$.

A direct consequence of Lemma~\ref{lm:concentration} is the following bound on the ordering error of $h_{\theta_T}$ over $\cI$, \begin{align*}
    \E[R(\theta_T)] \leq \sum_{i\neq j} \frac{2\min\left\{2dT\left(\alpha_{ij}(\Delta_{ij})+ \beta_{ij}(\Delta_{ij})\right), 1 \right\}}{n(n-1)}. 
\end{align*}
The right-hand side in the above inequality can be bounded further by utilizing that $\Delta_{ij}\geq|i-j|\Delta_*$. Together with Markov's inequality, this yields the following bound on $P(R(\theta_T) \geq \epsilon)$.

\begin{theorem}[Upper bound on the ordering error]\label{thm:upper_bound}
\hfill \\
Let $\alpha_* := \max_{i\neq j}  \alpha_{ij}(\Delta_*)$ and $\beta_* := \max_{i\neq j}  \beta_{ij}(\Delta_*)$, with $\alpha, \beta$ from Lemma~\ref{lm:concentration}. 
Then, for $\alpha_*, \beta_* \leq \frac{1}{4dT}$ and any $\epsilon \in (0, 1)$, the ordering error $R(\theta_T)$ satisfies \begin{align*}
   P(R(\theta_T) \geq \epsilon) &\leq \frac{4dT}{\epsilon n} \left(\left(\alpha_*^{-1} -1 \right)^{-1} + \left(\beta_*^{-1} -1 \right)^{-1} \right) 
   \approx \frac{4dT}{\epsilon n} \left(\alpha_* + \beta_* \right)~,
\end{align*}%
where $\alpha_*$ and $\beta_*$ decay exponentially with $T$.
\end{theorem}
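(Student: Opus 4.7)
The plan is to combine Lemma~\ref{lm:concentration} with the margin assumption on each pair, sum the resulting inversion probabilities using a geometric series, and finish with Markov's inequality.

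First, I would link per-pair ordering errors to deviation events. Because $|\sigma(z_{ij}^\top\theta_*)-1/2|>\Delta_{ij}$ by the margin assumption, a perturbation of $\sigma(z_{ij}^\top\theta)$ of magnitude at most $\Delta_{ij}$ cannot flip $\sgn(\theta^\top z_{ij})$, so
\[
P\bigl(h_{\theta_T}(i,j)\neq\pi_{ij}\bigr) \;\le\; P\bigl(|\sigma(z_{ij}^\top\theta_*)-\sigma(z_{ij}^\top\theta_T)|>\Delta_{ij}\bigr) \;\le\; 2dT\bigl(\alpha_{ij}(\Delta_{ij})+\beta_{ij}(\Delta_{ij})\bigr),
\]
where the final inequality is Lemma~\ref{lm:concentration}. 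The precondition $\alpha_{ij}(\Delta_{ij}),\beta_{ij}(\Delta_{ij})\le 1/(4dT)$ needed to invoke the lemma is inherited from the theorem's hypothesis $\alpha_*,\beta_*\le 1/(4dT)$, via monotonicity of $\alpha_{ij},\beta_{ij}$ in $\Delta$ combined with $\Delta_{ij}\ge\Delta_*$.

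Second, I would reduce $\alpha_{ij}(\Delta_{ij})$ and $\beta_{ij}(\Delta_{ij})$ to powers of $\alpha_*$ and $\beta_*$. Since $\alpha_{ij}(\Delta)=\exp(-c^\alpha_{ij}\Delta^2 T)$, one has $\alpha_{ij}(\Delta_{ij})=\alpha_{ij}(\Delta_*)^{(\Delta_{ij}/\Delta_*)^2}$. Using $\Delta_{ij}\ge|i-j|\Delta_*$ and $\alpha_{ij}(\Delta_*)\le\alpha_*\le 1$ yields $\alpha_{ij}(\Delta_{ij})\le\alpha_*^{|i-j|^2}\le\alpha_*^{|i-j|}$, and analogously $\beta_{ij}(\Delta_{ij})\le\beta_*^{|i-j|}$. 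Indexing items by their true rank, exactly $2(n-k)$ ordered pairs have rank distance $k$, so the geometric sum gives
\[
\sum_{i\neq j}\alpha_*^{|i-j|} \;=\; 2\sum_{k=1}^{n-1}(n-k)\alpha_*^{k} \;\le\; 2n\sum_{k=1}^{\infty}\alpha_*^k \;=\; 2n\,(\alpha_*^{-1}-1)^{-1},
\]
and symmetrically for $\beta_*$. Plugging these bounds into $\E[R(\theta_T)]=\tfrac{2}{n(n-1)}\sum_{i\neq j}P(h_{\theta_T}(i,j)\neq\pi_{ij})$ and then applying Markov's inequality $P(R(\theta_T)\ge\epsilon)\le\E[R(\theta_T)]/\epsilon$ delivers the stated inequality, with the $4dT/(\epsilon n)$ prefactor produced by careful bookkeeping of the pair count. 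The final approximation $(x^{-1}-1)^{-1}\approx x$ for small $x$ follows since $\alpha_*,\beta_*\le 1/(4dT)\ll 1$.

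The step I expect to be hardest is the second one: the whole argument hinges on the exponential form of $\alpha_{ij},\beta_{ij}$ so that $\Delta_{ij}^2\ge|i-j|^2\Delta_*^2$ can be pulled out of the exponent as an $|i-j|$-power, letting the pair-sum collapse to a geometric series. The $|i-j|$ factor in the definition of $\Delta_*$ is essential here: without it, pairs of items with nearly identical true scores would dominate the sum and prevent it from converging at the desired rate.
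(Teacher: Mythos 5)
Your proposal is correct and takes essentially the same route as the paper's proof: both convert an ordering mistake on a pair into the deviation event of Lemma~\ref{lm:concentration}, exploit the exponential form of $\alpha_{ij},\beta_{ij}$ together with $\Delta_{ij}\ge|i-j|\Delta_*$ to collapse the pair-sum into geometric series bounded by $(\alpha_*^{-1}-1)^{-1}$ and $(\beta_*^{-1}-1)^{-1}$, and finish with Markov's inequality. Your explicit treatment of the $(j-i)^2\to(j-i)$ relaxation and of how the lemma's precondition is inherited from $\alpha_*,\beta_*\le 1/(4dT)$ is, if anything, slightly more careful than the paper's own write-up.
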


Theorem~\ref{thm:upper_bound} suggests that the probability of $R(\theta_T) \geq \epsilon$ decays exponentially with a rate that depends on the quantities $\max_{i, j}\dot \sigma(z_{ij}^\top \theta_T)\|z_{ij}\|_{\Tilde{\hess}^{-1}_t(\theta_T)}$ and  $ \max_{i, j} \|z_{ij}\|^2_{\Tilde{\hess}^{-1}_t(\theta_T)}$. Both quantities are random variables that depend on the particular sampling strategy that yields $\hess_t$. Focusing on the leading term, $\max_{i, j}\dot \sigma(z_{ij}^\top \theta_T)\|z_{ij}\|_{\Tilde{\hess}^{-1}_t(\theta_T)}$, Theorem~\ref{thm:upper_bound} suggests that an active learner should gather data to minimize this quantity and obtain the smallest possible bound. 
The factor $\|z_{ij}\|^2_{\Tilde{\hess}^{-1}_T(\theta_T)}$  
is the weighted norm of $z_{ij}$ w.r.t. the inverse of the observed Fisher information (cf. \citet{kirsch2022unifying}). It controls the shape of the confidence ellipsoid around $\theta_T$ and the width of the confidence interval around $\theta_T^\top z_{ij}$. The leading term in Theorem~\ref{thm:upper_bound} re-scales this quantity with aleatoric noise under the MLE estimate $\theta_T$. This suggests that higher epistemic (model) certainty is needed in directions with high aleatoric uncertainty---where item similarity increases noise in comparisons. 

In Appendix~\ref{app:theory_ext}, we comment on i) generalizations to regularized preference models, ii) applications to generalized linear models with other link functions, iii) lower bounds on the ordering error, and iv) an algorithm-specific upper bound.

%
%
\section{Greedy uncertainty reduction for ordering (GURO)}
\label{sec:algorithms}
We present an active preference learning algorithm based on greedy minimization of the bound in Theorem~\ref{thm:upper_bound}, called GURO.  We begin with fully contextual preference models of the form $\sigma(\theta^\top z_{ij})$ and return in Section~\ref{sec:models} to parameterization variants to reduce the effects of model misspecification.

\begin{algorithm}[t]
\caption{\textbf{G}reedy \textbf{U}ncertainty \textbf{R}eduction for \textbf{O}rdering (GURO), [BayesGURO] }\label{alg:meta_alg}
\begin{algorithmic}[1]
\Require Training items $\cI_D$, attributes $\bX = \{x_i\}_{i \in \cI_d}$
\State Initialize $\theta_0$
\For{$t = 1, ..., T$}
\State Draw $(i_t,j_t)$ based on $\theta_t$ according to {\color{DarkOrange}\eqref{eq:inf_crit}} [or {\color{DarkGreen}\eqref{eq:bayes_crit}}]
\State Observe $c_t$ from noisy comparison (annotator) 
\State $D_t = D_{t-1} \cup \{i_t, j_t, c_t)\}$ 
\State {\color{DarkOrange}$\theta_t = \mathrm{MLE}(D_t)$} according to {\color{DarkOrange}\eqref{eq:mle_estimator}} [or {\color{DarkGreen} $\theta_t = \mathrm{MAP}(D_t)$} as in {\color{DarkGreen}\eqref{eq:map_estimate}} in the Appendix]
\EndFor
\State Return $h_T$
\end{algorithmic}
\end{algorithm}

The main component of the bound in Theorem~\ref{thm:upper_bound} to be controlled by an active learner is the term 
\begin{equation}\label{eq:uncertainty}
\max_{i, j \in \cI} \; \dot \sigma(z_{ij}^\top \theta_T)\|z_{ij}\|_{\hess^{-1}_T(\theta_T)} ~,
\end{equation}
which represents the highest uncertainty in the comparison of any items $i,j \in \cI$ under the model $\theta_T$. A smaller value of \eqref{eq:uncertainty} yields a smaller bound and a stronger guarantee. Recall that, for any $t=1, ..., T$, $\theta_t$ is the MLE estimate of the ground-truth parameter $\theta_*$ with respect to the observed history $D_t$. Both factors in \eqref{eq:uncertainty} are determined by the sampling strategy that yielded the item pairs $(i_t, j_t)$ in $D_T$ and, therefore, $\hess_T$ and $\theta_T$ (the results of comparisons $c_{ij}$ are outside the control of the algorithm, but $z_{ij}$ are known).

Direct minimization of \eqref{eq:uncertainty}, for a subset $\mathcal{I}_D$, is not feasible without access to comparisons $c_{ij}$ and their likelihood under $\theta_T$. Instead, we adopt a greedy, alternating approach: In each round, a) a single pair is sampled for comparison by maximizing \eqref{eq:uncertainty} under the current model estimate, and b) $\theta_t$ is recomputed based on $D_t$. Specifically, at $t=1, ..., T$, we sample, 
\usetagform{DarkOrange}
\begin{align}\label{eq:inf_crit}
    i_t, j_t = \argmax_{i, j \in \mathcal{I}_D, i \neq j} \dot\sigma(z_{ij}^\top \theta_{t-1})\|z_{ij}\|_{\hess_{t-1}^{-1}(\theta_{t-1})}~.
\end{align}
\usetagform{default}%
We refer to this sampling criterion as Greedy Uncertainty Reduction for Ordering (GURO), since it reduces the uncertainty of $\theta_t$ in the direction of $z_{ij}$. To see this, consider the change of $\hess_{t}(\theta_t)$ after a single play of $i_t, j_t$. The Sherman-Morrison formula~\citep{sherman50} yields, 
\begin{equation}\label{eq:shermanmorrisson}
    \hess_t^{-1}(\theta_{t-1}) =\hess_{t-1}^{-1}(\theta_{t-1}) 
     - \dot\sigma(z_t^\top \theta_{t-1})\frac{\hess_{t-1}^{-1}(\theta_{t-1}) z_tz_t^\top \hess_{t-1}^{-1}(\theta_{t-1})}{1 +\dot\sigma(z_t^\top \theta_{t-1}) \|z_t \|^2_{\hess_{t-1}^{-1}(\theta_{t-1})}}~, %
\end{equation}
where $z_t \coloneqq z_{i_tj_t}$. With $\xi$ as the second term in \eqref{eq:shermanmorrisson}, it holds for all $i < j \in \cI$, with $\hess_{t-1} = \hess_{t-1}(\theta_{t-1})$, that 
$
    ||z_{ij}||^2_{\hess_t^{-1}(\theta_{t-1})} = ||z_{ij}||^2_{\hess_{t-1}^{-1}} - ||z_{ij}||^2_{\xi} 
    \leq  ||z_{ij}||^2_{\hess_{t-1}^{-1}}~.
$
The inequality is strict for the pair $i_t, j_t$ in \eqref{eq:inf_crit}. 
As $\theta_t$ converges to $\theta_*$, this pair becomes representative of the maximizer of \eqref{eq:uncertainty} provided there is no major systematic discrepancy between $\cI_D$ and $\cI$.

Surprisingly, GURO can also be justified from a Bayesian analysis. Consider a Bayesian model of the parameter $\theta$ with $p(\theta)$ the prior belief and $p(\theta \mid D_t)$ the posterior after observing the preference feedback in $D_t$. A natural active learning strategy is to sample items $i_t, j_t$ for which the model preference is highly uncertain  under the posterior distribution, 
\usetagform{DarkGreen}
\begin{align}\label{eq:bayes_crit}
    i_t, j_t = \argmax_{i, j \in \mathcal{I}_D, i < j} \hat{\bbV}_{\theta \mid D_{t-1}}[\sigma(\theta^\top z_{ij})]~, 
\end{align}
\usetagform{default}%
where $\hat{\bbV}_{\theta \mid D_{t-1}}[\sigma(\theta^\top z_{ij})]$ is a finite-sample estimate of the variance in predictions, computed by sampling from the posterior. In Appendix~\ref{app:bayesguro}, we show that the first-order Taylor expansion of the true variance is equal to the GURO criterion. Hence, we refer to sampling according to \eqref{eq:bayes_crit} as BayesGURO. Unlike GURO, BayesGURO can incorporate prior knowledge through $p(\theta)$ and benefits from controlled stochasticity through the empirical estimate $\hat{\bbV}$, which makes it appropriate for batched algorithms---a deterministic criterion would construct batches of a single item pair. Both GURO and BayesGURO are presented in Algorithm~\ref{alg:meta_alg}.

\paragraph{Computational Complexity:} Running the algorithms requires $O(n^2)$ operations each iteration to evaluate the sampling criteria (Equation~\ref{eq:inf_crit} or~\ref{eq:bayes_crit}) on all possible pairs, a problem shared by many active preference learning algorithms \citep{qian2015learning, canal2019active, houlsby_bayesian_2011}. A way of mitigating this computational complexity is to, at each time step, sample a fixed number of comparisons and only evaluate on these, similar to the approach taken in \citet{canal2019active}. When only looking at a sample of $m \ll n^2$ pairs, the complexity is reduced to $O(m)$. While making $m$ too small can hurt the sample complexity, we describe in Appendix \ref{app:exp_details} how we implemented this sub-sampling strategy to speed up computations in one of our experiments and observed no noticeable change in performance. Lastly, we want to highlight that in many realistic scenarios, the computational burden pales in comparison to the time it takes to query an annotator.

\subsection{Preference models for in- and out-of-sample ordering}
\label{sec:models}
Our default preference model $h(i,j) = \mathds{1}[f(i,j)>0]$ is based on a \emph{fully contextual} scoring function
\begin{equation}\label{eq:model_context}
f_\theta(x_i, x_j) = \theta^\top (x_i - x_j)~,
\end{equation}
fit with a logistic likelihood $\sigma(f(i,j)) \approx p(C_{ij}=1)$.
The model's strength is that the variance in its estimates grows with $d$, but not with $n = |\cI|$, often resulting in quicker convergence than non-contextual methods for moderate dimension $d$ (see, e.g., Figure~\ref{fig:real_data}). The fully contextual model also generalizes to unseen items as long as the attributes for $\cI_D$ span attributes observed for $\cI$.

The limitations of a fully contextual model are model misspecification (error due to the functional form), and noise (error due to $C$ not being fully determined by $X$). The former can be mitigated by applying the linear model to a representation function $\phi : \cX \rightarrow \bbR^{d'}$,
$
f_\theta(x_i, x_j) = \theta^\top (\phi(x_i) - \phi(x_j)).
$
A good representation $\phi$, e.g., from a foundation model, can mitigate misspecification and admit different input modalities. As demonstrated in Figure~\ref{fig:worst_case} in the Appendix, even a representation pre-trained for a different task can perform much better than a random initialization.\footnote{It is feasible to update representations during exploration \citep{xu2022neural, singh2021deep}, but we do not consider that here.}

Noise due to insufficiencies in $X$ cannot be mitigated by a representation $\phi(x)$;  If annotators consistently compare items based on features $U$ not included in $X$, no function $h(X_i, X_j)$  can perfectly order the items.  However, for in-sample ordering of $\cI_D$, adding per-item parameters $\zeta_i \in \bbR$ to the scoring function, one for each item $i\in \cI_D$, can mitigate both misspecification and noise, 
\begin{equation}\label{eq:model_hybrid}
f_{\theta, \boldsymbol{\zeta}}(x_i, x_j) = \theta^\top (\phi(x_i) - \phi(x_j)) + (\zeta_i - \zeta_j)~.
\end{equation}
We call this a \emph{hybrid} model and apply it in ``GURO Hybrid'' and baselines in experiments. 
The term $\zeta_i-\zeta_j$ can correct the residual of the fully contextual model, which is small if a) the context captures the most relevant information about the ordering, and b) the functional form $\theta^\top \phi(x_i)$ is nearly well-specified.
Using $\zeta_i-\zeta_j$ alone is sufficient in-sample, but has high variance (the dimension is $n$ instead of $d$) and poor generalization ($\zeta_i$ are unknown for items $i \not\in \cI_D$). In practice, we use L2 regularization to prevent the model from learning an arbitrary $\theta$ by using the full expressivity of $\zeta_i$ (see Appendix \ref{app:exp_details} for details). Empirically, our hybrid models exhibit the best of both worlds: When $\phi$ is poor, the model recovers and competes with non-contextual models (Figure~\ref{fig:worst_case}); when $\phi$ is good, convergence matches fully contextual models (Figure~\ref{fig:real_experiments}). 
%
%
%
\section{Experiments}
\label{sec:experiments}
We evaluate GURO (Algorithm~\ref{alg:meta_alg}) and GURO Hybrid (see Section~\ref{sec:models}) in four image ordering tasks, one with logistic (synthetic) preference feedback, and three tasks based on real-world feedback from human annotators\footnote{Our code is available at: \href{https://github.com/Healthy-AI/GURO}{https://github.com/Healthy-AI/GURO}}.
We provide a synthetic experiment in Appendix \ref{sec:synth_exp} that includes empirical estimates of the bound in Theorem~\ref{thm:upper_bound}. The experiments include five diverse baseline algorithms, described next.
BALD~\citep{houlsby_bayesian_2011} is \emph{a priori} the strongest baseline since it is a contextual active learning algorithm, unlike the others. Its selection criterion greedily maximizes the decrease in posterior entropy, which amounts to reducing the epistemic uncertainty and includes a term to downplay the influence of aleatoric uncertainty. This is not always beneficial, as suggested by our analysis in Section~\ref{sec:analysis}, since learners may require several comparisons of high-uncertainty pairs to get the order right. CoLSTIM~\citep{bengs2022stochastic} is a contextual bandit algorithm, developed for regret minimization and is not expected to perform well here. It is included to illustrate the mismatch between regret minimization and our setting. 

TrueSkill~\citep{herbrich2006trueskill,graepel2012score-based} is a non-contextual skill-rating system that models the score of each item as a Gaussian distribution, disregarding item attributes, and has been adopted in various works to score items based on subjective pairwise comparisons~\citep{larkin_measuring_2022,naik2014streetscore,sartori2015affective}.  
We use the sampling rule from \citet{hees2016}, designed for ordering. Finally, we include Uniform sampling, and to illustrate the importance of accounting for aleatoric uncertainty, we use a version of GURO called NormMin that ignores the $\dot \sigma (z_{ij}^\top \theta_t)$ term and plays the pair maximizing $\|z_{ij}\|_{\hess_t^{-1}(\theta_t)}$, i.e., it minimizes the \emph{second-order} term in Lemma~\ref{lm:concentration}. NormMin corresponds to the selection criterion in the concurrent work \citet{das2024provably}, adapted to our problem of finding the correct ordering. We refer the reader to Appendix~\ref{sec:norm_min} for a detailed comparison where NormMin performs significantly worse than Uniform on certain problem instances, and Appendix~\ref{app:exp_details} for details regarding the implementation and the choice of hyperparameters for GURO, BayesGURO, and baselines.
%

%
%
\subsection{Ordering X-ray images under the logistic model}
\label{exp:x_ray_age}
Our first task (X-RayAge) is to order X-ray images based on perceived age~\citep{ieki_deep_2022} where the preference feedback follows a (well-specified) logistic model. We base this experiment on the data from the Kaggle competition ''X-ray Age Prediction Challenge'' \citep{spr-x-ray-age} which contains more than $10\ 000$ de-identified chest X-rays, along with the person's true age. Features were extracted using the 121-layer DenseNet in the TorchXrayVision package~\citep{Cohen2022xrv} followed by PCA projection, resulting in $35$ features. A ridge regression model, $\theta_*$, was fit to the true age ($R^2 \approx 0.67$). During active learning, feedback is drawn from $p(C_{ij}=1) = \sigma\left(\theta_*^\top z_{i,j} \cdot \lambda \right)$, where $\lambda$ (set to 0.1) controls the noise level. We only include the fully contextual models here since they are well-specified by design, meaning $\cI$ can be ordered using only contextual features. 

\begin{figure*}[t]
\centering
    \begin{subfigure}[htbp]{0.48\columnwidth}
        \centering
        \includegraphics[width=0.95\columnwidth]{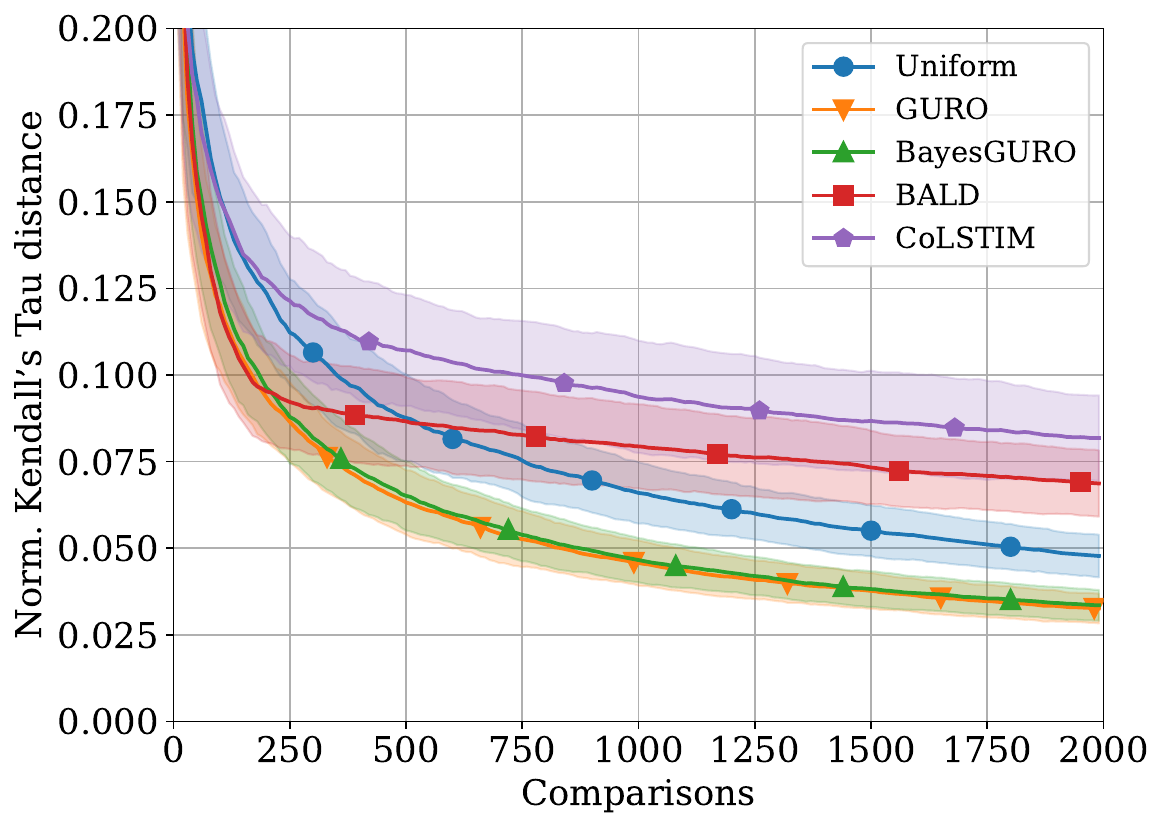}
        \caption{Mean $R_{I_D}$ with 1-sigma error region.}
        \label{fig:x_ray_train}
    \end{subfigure}%
    \quad
    \begin{subfigure}[htbp]{0.48\columnwidth}
        \centering
        \includegraphics[width=0.95\columnwidth]{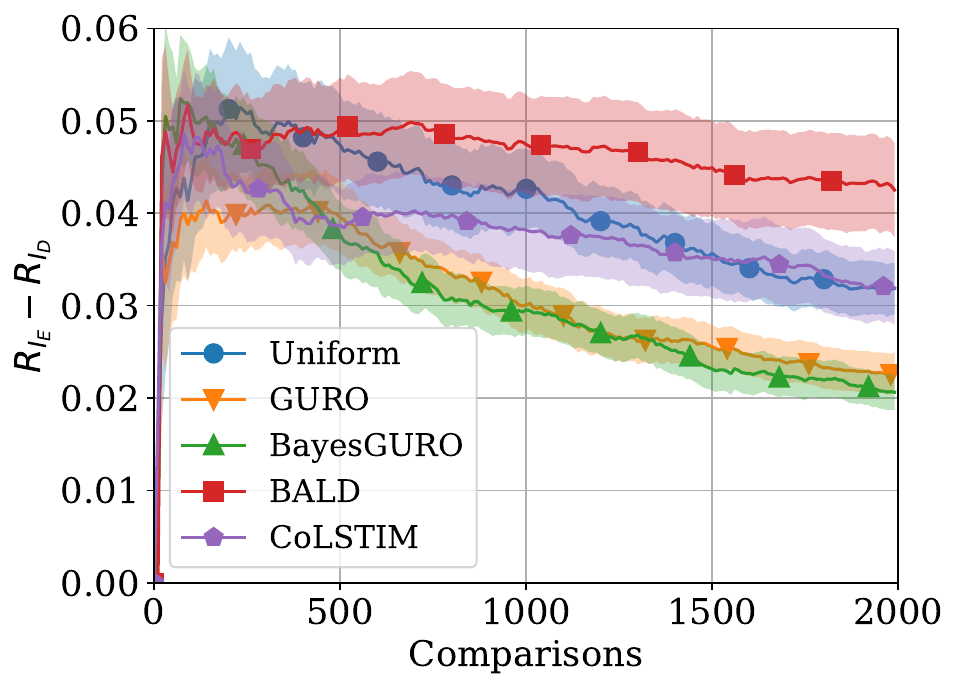}     
        \caption{Mean generalization error (95\% CI)}    
        \label{fig:x_ray_test_set}
    \end{subfigure} %
    \caption{\textbf{X-RayAge}. Performance of active sampling strategies when comparisons are simulated using a logistic model according to \eqref{eq:comparison_lr}. In-sample Kendall's Tau distance $R_{I_D}$ on 200 images (left) and generalization error $R_{I_E} - R_{I_D}$ for models trained on 150 images and evaluated on 150 images from a different distribution (right). All results are averaged over $100$ different random seeds.}
     \label{fig:x_ray_age}
\end{figure*}

In the first setting, we sub-sample 200 X-ray images uniformly at random from the full set. A ground-truth ordering of these elements is derived using the learned linear model.  Figure \ref{fig:x_ray_train} shows the ordering error over 2 000 iterations. GURO and BayesGURO perform similarly, both better than the baselines. BALD starts off converging about as fast as GURO, but plateaus, most likely as a result of actively avoiding comparisons with high aleatoric uncertainty---pairs where annotators disagree in their preferences. The poor performance of CoLSTIM highlights the discrepancy between regret minimization and recovering a complete ordering.

In the second setting, we evaluate how well the algorithms generalize to new items. First, we sample 300 X-ray images from the full dataset. Next, we split these into two sets, with one ($I_D$) containing the youngest $50 \%$ and the other ($I_E$) the oldest $50 \%$. The algorithms were then trained to order the list containing the younger subjects, but were simultaneously evaluated on how well they could sort the list containing the older subjects.  The continuously measured difference in ordering error evaluated on $I_E$ and $I_D$ are presented in Figure \ref{fig:x_ray_test_set}. While all algorithms are worse at ordering items in $I_E$, GURO and BayesGURO achieve the lowest average difference. Together with Figure~\ref{fig:x_ray_train}, this means that our proposed algorithms achieved the best in-sample and out-of-sample orderings. For completeness, the in-sample performance of algorithms in the generalization experiment in Figure \ref{fig:x_ray_test_set} are included in Appendix \ref{sec:norm_min}.

\subsection{Ordering items with human preference data}
\begin{table}[t]
    \centering
    \caption{Datasets with preference feedback from annotators. Pretrained models are ResNet34 \citep{he_deep_2016}, all-mpnet-base-v2 \citep{reimers-2019-sentence-bert}, and FaceNet \citep{schroff_facenet_2015}.}
    \begin{tabular}{lccccc}
        \toprule
        Dataset & $n$ & $d$ &  \#comparisons & Data type & Embedding Model\\
       \midrule
        \textbf{ImageClarity} & $100$ & $63$ & $27\ 730$ & Image & ResNet34 (Imagenet) \\
        \textbf{WiscAds}  & $935$ & $162$ & $9\ 528$ & Text & all-mpnet-base-v2 \\
        \textbf{IMDB-WIKI-SbS} & $6072$ & $75$ & $110\ 349$ & Image & FaceNet (CASIA-Webface)\\
       \bottomrule
    \end{tabular}
    \label{tab:datasets}
\end{table}

\label{sec:exp_photoage}
Next, we evaluate our algorithm on three publicly available datasets to study the algorithms' performance when preference feedback comes from human annotators (see Table~\ref{tab:datasets} for an overview, detailed information of datasets in Appendix \ref{app:datasets}). The datasets are IMDB-WIKI-SbS \citep{Pavlichenko2021}, where annotators have stated which of two people appear older, ImageClarity \citep{zhang_crowdsourced_2016}, where modified versions of the same image have been compared according to the level of distortion, as well as the extended WiscAds dataset \citep{carlson_pairwise_2017}, where labels correspond to which political advertisement is perceived as more negative toward an opponent. In all datasets, pairs of items were sampled uniformly for annotation. For each experiment, we construct a feature vector $\phi(x_i) \in \bbR^d$ for all $n$ items using a pre-trained embedding model followed by PCA, applied to reduce computational complexity. We restrict algorithms to only query pairs for which an annotation exists and remove the annotation from the pool once queried. In cases where multiple annotations exist for the same pair, the feedback is chosen randomly among these.

\begin{figure*}[t]
    \centering
    \includegraphics[width=0.8\columnwidth]{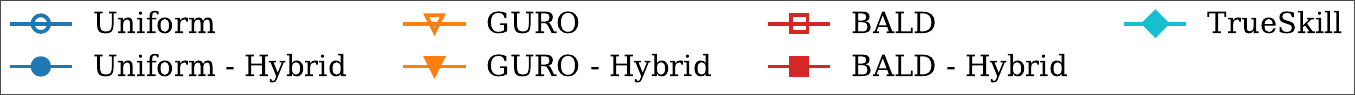}
    \centering
    \begin{subcaptionblock}[t]{0.45\columnwidth}
        \centering
        \includegraphics[width=0.99\columnwidth]{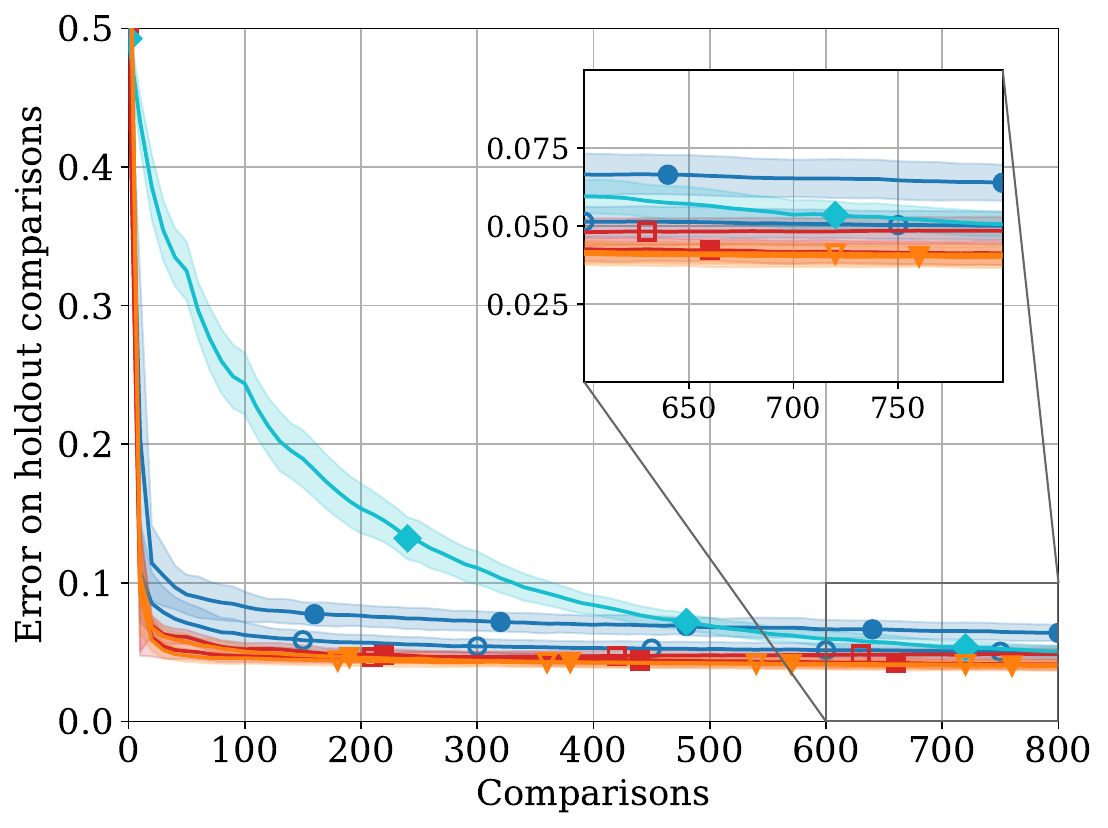}
        \caption{\textbf{ImageClarity.} $n = 100$, $d = 63$.}
        \label{fig:distortion}
    \end{subcaptionblock}%
    \quad
    \begin{subcaptionblock}[t]{0.45\columnwidth}
        \centering
        \includegraphics[width=0.99\columnwidth]{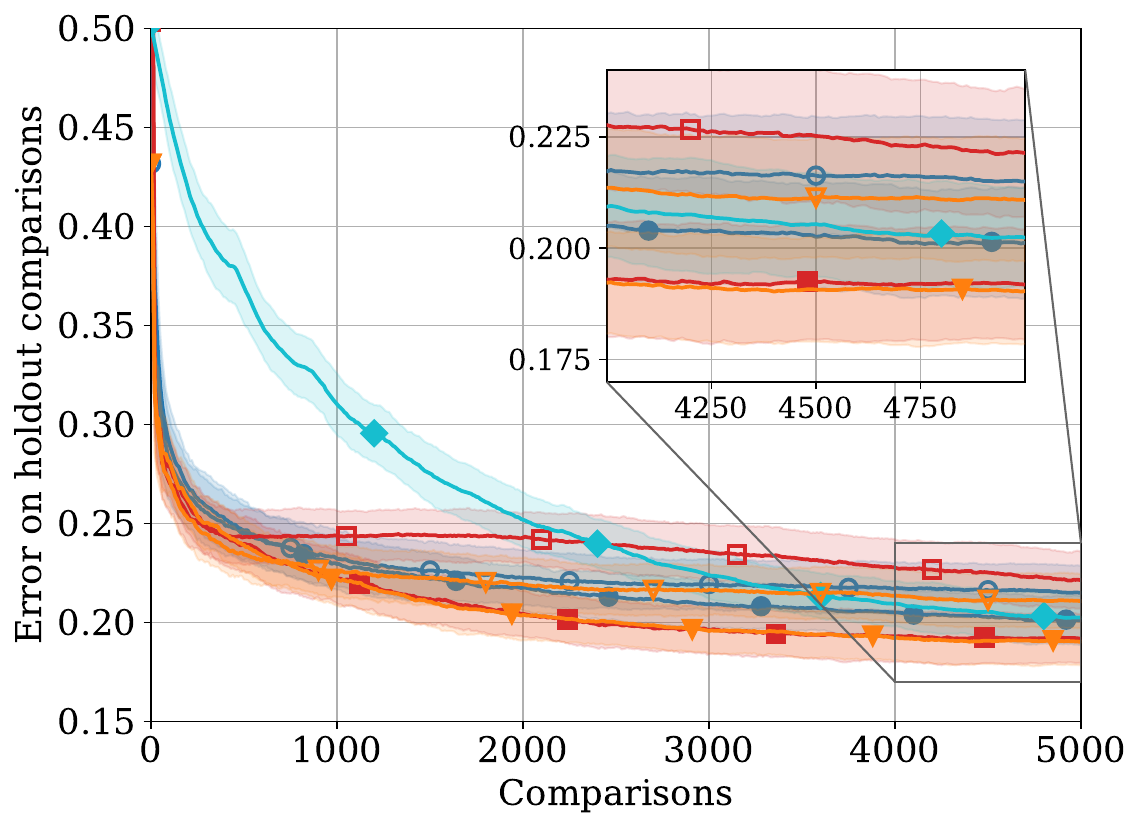}
        \caption{\textbf{WiscAds.} $n = 935$, $d = 162$.}
        \label{fig:sentence}
    \end{subcaptionblock}%
    \quad
    \begin{subcaptionblock}[t]{0.45\columnwidth}
        \centering        \includegraphics[width=0.99\columnwidth]{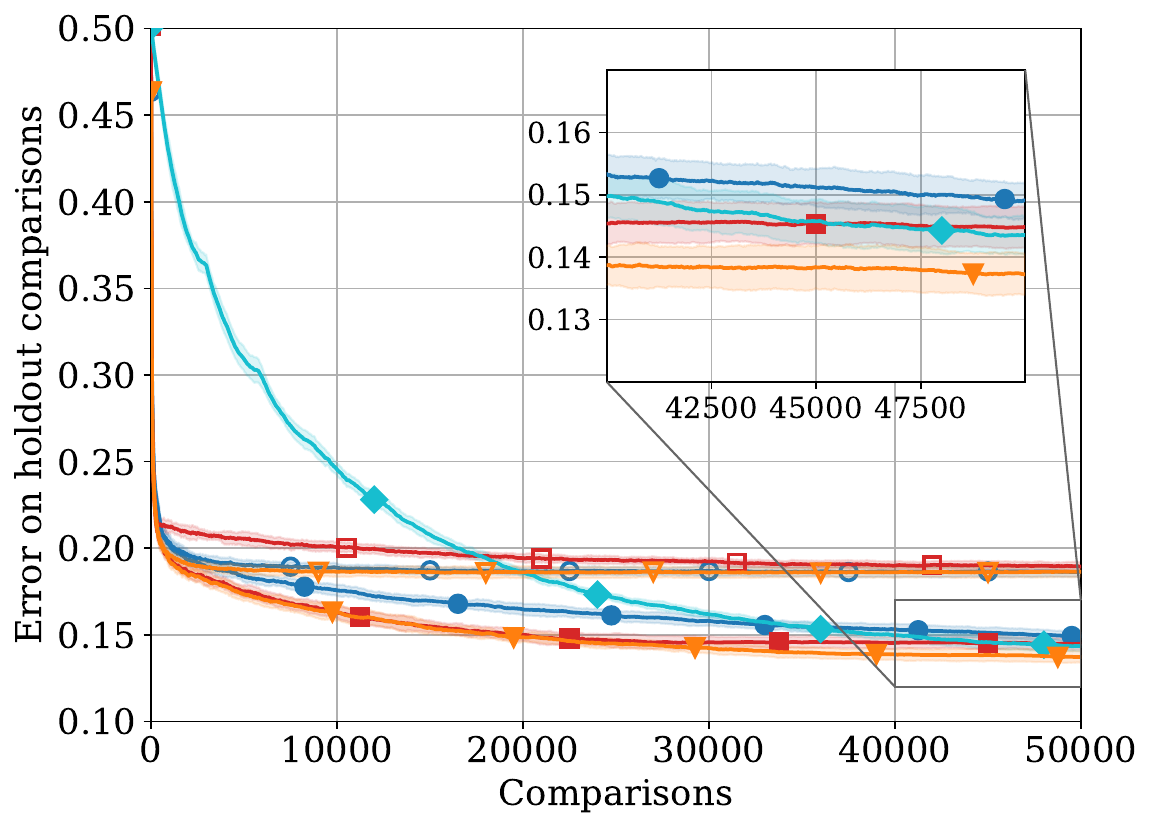}
        \caption{\textbf{IMDB-WIKI-SbS.}  $n = 6\ 072$, $d = 75$.}
        \label{fig:real_data}
    \end{subcaptionblock}%
    \quad
    \begin{subcaptionblock}[t]{0.45\columnwidth}
        \centering        \includegraphics[width=0.99\columnwidth]{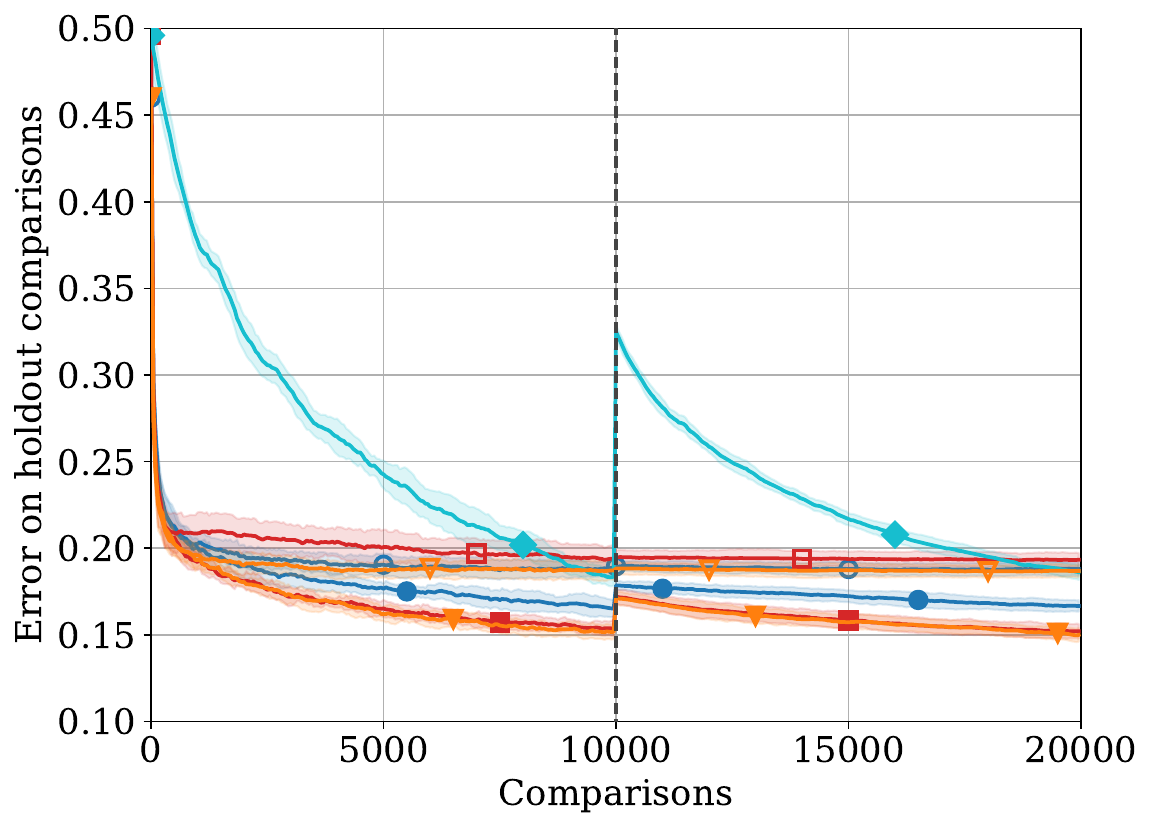}
        \caption{\textbf{IMDB-WIKI-SbS.} $n = 3\ 000$ $(6\ 072)$.}
        \label{fig:add_data}
    \end{subcaptionblock}%
    \caption{The empirical error $\hat{R}_{D'}(h)$ on a holdout comparison set $D'$ when comparisons are made by human annotators. The plots are averaged over $100$ (\subref{fig:distortion},\subref{fig:sentence}) or 10 (\subref{fig:real_data},\subref{fig:add_data}) seeds, and the shaded area represents one standard deviation above and below the mean. For every seed, $10 \%$ of comparisons were used for the holdout set. In (\subref{fig:add_data}) we initially order a list $\cI_D$ of $3\ 000$ images. After $10\ 000$ comparisons the remaining $3\ 072$ images, $\cI \setminus \cI_D$, are added. }
    \label{fig:real_experiments}
\end{figure*}

The images in the ImageClarity dataset have been constructed to have an objective ground truth ordering but this is not the case for WiscAds or IMDB-WIKI-SbS. As the ground-truth ordering is generally unknown also in real-world applications, we evaluate methods by the error on a held-out set of comparisons $D'$, $\hat{R}_{D'}(h) = \frac{1}{|D'|}\sum_{(i,j,c) \in D'}\mathds{1}[h(i,j) \neq c]$. This serves as an empirical analog of Kendall's Tau distance and a minimizer of $\hat{R}_{D'}(h)$ will minimize $R_{\cI}(h)$ for sufficiently large $D'$, but will not converge toward $0$ since there is inherent noise in annotations. This metric makes no assumptions on the ground truth ordering unlike the alternative approach of fitting an ordering to all available comparisons, see e.g., \citet{maystre2017just}. In Appendix~\ref{sec:bt_counter}, we show results for the latter that highlight the limitations of estimating a ``ground-truth'' ordering, as well as the similar results when measuring the distance to the objective ground-truth ordering of the ImageClarity dataset. The longest trajectory (single seed) for any algorithm took less than 35hrs to complete on one core of an Intel Xeon Gold 6130 CPU and required at most 10 GB of memory. 

In all experiments, we compare fully contextual (\ref{eq:model_context}) and hybrid (\ref{eq:model_hybrid}) versions of GURO, BALD, and Uniform, as well as TrueSkill. The results of each experiment can be seen in Figure~\ref{fig:real_experiments}. Figure~\ref{fig:distortion} shows that the ImageClarity dataset is the easiest to order using contextual (non-hybrid) features. This is expected, as features relevant to the level of distortion are low-level. In this case, the choice of adaptive strategy has a modest impact on the ordering error. Figures~\ref{fig:sentence} and \ref{fig:real_data} highlight the differences between modeling strategies. The fully contextual algorithms initially improve rapidly, achieving a rough ordering of the items, before plateauing and not making any real improvements. This indicates that the features are informative enough to roughly order the list, but insufficient for retrieving a more granular ordering. The non-contextual TrueSkill converges at a much slower pace but keeps improving steadily throughout. Perhaps most interesting are the hybrid algorithms, which seemingly reap the benefits of both methods, improving as quickly as the contextual methods, but avoiding the plateau. In fact, in Figure \ref{fig:worst_case} in the Appendix we show that the hybrid models perform comparably to TrueSkill even when features are completely uninformative. 

The limitations of BALD are most noticeable in the fully contextual case, where it plateaus at a higher error compared to GURO and Uniform. This is however not as prominent when we use BALD in conjunction with our hybrid model, likely a result of the increased dimensionality of the model causing BALD Hybrid to attribute more of the observed errors to model uncertainty. While this initially causes the algorithm to avoid fewer comparisons that are subject to aleatoric uncertainty, the final iterations in Figure~\ref{fig:real_data} suggest that BALD Hybrid can still run into this issue given enough samples. In all experiments, GURO and GURO Hybrid perform better than or similar to our baselines, never worse. Additionally, Figures~\ref{fig:sentence} and \ref{fig:real_data} showcase how our hybrid model can increase performance when used with existing sampling strategies, such as BALD or Uniform.

The final experiment, visible in Figure \ref{fig:add_data}, is a few-shot scenario where after some time, additional images are added to the pool of items. IMDB-WIKI-SbS was used as it contained the highest number of both images and comparisons. The initial pool consists of $3\ 000$ images sampled from the dataset. After $10\ 000$ steps, the remaining $3\ 072$ images were added to the pool. The results again emphasize the differences between our three types of models; the increase in error of the fully contextual model is very slight, likely a result of added samples being drawn from the same distribution. For TrueSkill, the error increases drastically as a result of the algorithm not having seen these items before and having no way of generalizing the results of previous comparisons to them. Lastly, the hybrid algorithms seem to be moderately affected. The error increases as the model has not yet tuned any of the added per-item parameters, but the extent is much smaller than for TrueSkill as the model can provide a rough ranking of the out-of-sample elements using the contextual features.

%
%
\section{Conclusion}
\label{sec:discussion}
We have demonstrated the benefits of utilizing contextual features in active preference learning to efficiently order a list of items. Empirically, this leads to quicker convergence, compared to non-contextual methods, and allows algorithms to generalize out-of-sample. We derived an upper bound on the ordering error and used it to design an active sampling strategy that outperforms or matches baselines on realistic image and text ordering tasks. Both theoretical and empirical results highlight the benefit of accounting for noise in comparisons when learning from human annotators.  
%

The optimality of our sampling strategy remains an open question. A future direction is to derive a lower bound on the ordering error, and prove an---ideally matching---algorithm-specific upper bound. However, constructing upper bounds for related fixed-budget tasks is an open problem~\citep{qin2022open}. Moreover, motivated by the annotation setting, our focus has been on reducing sample complexity and we leave it to future work to explore potential linear approximations of the sampling criteria and other trade-offs between sample complexity and computational complexity. Further, our approach can potentially be improved by performing representation learning throughout the learning process.
Finally, our experiments are constrained to a limited amount of already-collected (offline) human preference data, causing different algorithms to select disproportionately similar comparisons. Future work should evaluate the strategies in an online setting. 

\newpage
\section*{Acknowledgements}

FDJ and HB are supported by Swedish Research Council Grant 2022-04748. FDJ is also supported in part by the Wallenberg AI, Autonomous  Systems and Software Program founded by the Knut and ALice Wallenberg Foundation. EC and DD are supported by Chalmers AI Research Centre (CHAIR). The computations were enabled by resources provided by the National Academic Infrastructure for Supercomputing in Sweden (NAISS), partially funded by the Swedish Research Council through grant agreement no. 2022-06725.

\bibliographystyle{plainnat}
\bibliography{main}

\small

\nocite{scikit-learn}

\newpage

\appendix

\section{Notation}
\begin{table}[H]
    \centering
    \caption{Notation}
    \label{tab:notation}   
    \begin{tabular}{ll}
    \toprule
        $\cI$ & Collection of items $\cI = \{1, ..., n\}$ \\
        $n$ & Number of items \\
        $d$ & Dimension of item attributes \\
        $x_i \in \bbR^d$ & Context attributes for item $i \in \cI$ \\
        $z_{ij} \in \bbR^d$ & $z_{ij}=x_i - x_j$ for $i, j \in \cI$ \\
        $y_i$ & Score for item $i \in \cI$ \\
        $c_t \in \{0, 1\}$ & The outcome of the comparison at time $t$, $1$ if $i_t$ was preferred to $j_t$ \\
        $D_t$ & $D_t = ((i_1, j_1, c_1), ..., (i_t, j_t, c_t))$
        \\
        $\theta \in \bbR^d$ & 
        Model parameter \\
        $\theta_* \in \bbR^d$ & Model parameter of the environment \\
        $\theta_t \in \bbR^d$ & Estimated parameter at time $t$ \\
        $\sigma(.)$ & Sigmoid (logistic) function \\
        $\dot \sigma(.)$ & derivative  of $\sigma(.)$ \\
        $\hess_t(\theta)$ & Hessian of the negative log-likelihood $\hess_t(\theta):=\sum_{s=1}^t \dot \sigma(z_s^\top \theta)z_sz_s^\top  $ \\
        $\Tilde{\hess}_t(\theta)$ & Hessian normalized by number of plays $\Tilde{\hess}_t(\theta) := \frac{1}{t}\hess_t(\theta)$\\
        $\theta_{B,t} \in \bbR^d$ & The MAP estimate of $\theta$ at time $t$ \\
        $\hess_{B,t}$ & The Hessian in the Bayesian setting, adjusted by the prior covariance $\hess^{-1}_{B,0}$ \\ $\|z_{ij}\|_{\hess^{-1}_t(\theta)}$ & $\|z_{ij}\|_{\hess^{-1}_t(\theta)}= \sqrt{z_{ij}^\top \hess^{-1}_t(\theta) z_{ij} }$ \\
        $h$ & Comparison model (binary output)\\
        $f$ & Comparison logit (typically linear), e.g., $f_\theta(i,j) = \theta^\top (x_i - x_j)$ \\
    \bottomrule
    \end{tabular}    
\end{table}

\newpage

\section{Algorithms}

\subsection{MLE estimator for logistic regression}
The log-likelihood $L_t(\theta)$ of data $D_t = \{(i_s, j_s, c_s)\}_{s=1}^t$, with $z_s = x_{i_s} - x_{j_s}$, under a logistic regression model with parameters $\theta$ is defined by
$$
L_t(\theta) = \sum_{s=1}^t \left( c_s \log \sigma(\theta^\top z_s) + (1-c_s)(1-\sigma(\theta^\top z_s)) \right)~.
$$
The maximum likelihood estimator (MLE) at time $t$ is the parameters
\begin{equation}
\theta_t = \argmax_\theta L_t(\theta)~.
\end{equation}
The regularized estimator with ridge/$\ell_2$ penalty with parameter $\lambda$ is 
$$
\theta^R_t = \argmin_\theta -L_t(\theta) + \lambda \|\theta\|_2^2~.
$$

\subsection{Bayesian estimator for logistic regression}
\label{sec:bayes_imp}

$\theta_{B,t}$ is the MAP estimate of $\theta$ at time $t$ according to the log likelihood

\begin{equation}\label{eq:map_estimate}
\theta_{B,t} = \argmax_\theta \ln p(\theta \mid D_t),
\end{equation}

where
\begin{equation*}
\begin{split}
\ln p(\theta \mid D_t) = & -\frac{1}{2}(\theta - \theta_{B,0})^\top \hess^{-1}_{B,0} (\theta - \theta_{B,0}) \\
& + \sum_{t} c_{t} \ln(\sigma(z_{i_t, j_t}^\top \theta)) + (1 - c_{t}) \ln(1 - \sigma(z_{i_t, j_t}^\top \theta)) + const.
\end{split}
\end{equation*}

The hessian at time $t$ is defined as

$$
\hess_{B,t} = \hess_{B,0} + \sum_{(i,j) \in D_t} \dot{\sigma}(z_{i,j}^\top \theta_{B,t})z_{i,j} z_{i,j}^\top = \hess_{B,0} + \hess_{t}.
$$
Moreover, if priors $\theta_{B,0} = \mathbf{0}$ and $\hess^{-1}_{B,0} = I_d$ are used, the log likelihood boils down to:
\begin{equation*}
\begin{split}
\ln p(\theta \mid D_t) = & -\frac{1}{2} ||\theta||_2^2 + \sum_{t} c_{t} \ln(\sigma(z_{i_t, j_t}^\top \theta)) + (1 - c_{t}) \ln (1 - \sigma(z_{i_t, j_t}^\top \theta)) + const
\end{split}
\end{equation*}
which implies that the MAP estimate will be the same as the MLE estimate with ridge regularisation in the frequentist setting. Similarly, the Hessian becomes:
$$
\hess_{B,t} = \hess_{t} + I_d
$$

Sequential updates are also possible in the Bayesian setting by using your current estimates as the new priors. Note that this will give slightly different results, as the calculation of $\hess_{B,t}$ depends on the current estimate of $\theta_{B,t}$. 

\subsection{Stochastic Bayesian uncertainty reduction (BayesGURO)}
\label{app:bayesguro}
We describe BayesGURO, a Bayesian sampling criterion, closely related to GURO. Consider a Bayesian model of the parameter $\theta$ with $p(\theta)$ the prior belief and $p(\theta \mid D_t)$ the posterior after observing the preference feedback in $D_t$. A natural strategy for learning more about the ordering of $\cI$ is to sample items $i_t, j_t$ based on an estimate of the posterior variance of predictions for their comparison, 
\begin{align}\label{eq:bayes_crit_app}
    i_t, j_t = \argmax_{i, j \in \mathcal{I}_D, i < j} \hat{\bbV}_{\theta \mid D_{t-1}}[\sigma(\theta^\top z_{ij})]~.
\end{align}
Here, $\hat{\bbV}_{\theta \mid D_t}[\sigma(\theta^T z_{ij})]$ is an estimate of the variance of probabilities $\sigma(\theta^T z_{ij})$, computed from finite samples drawn from the posterior of $\theta$. Estimating the variance in this way both i) allows for tractable implementation, and ii) induces controlled stochasticity in the selection of item pairs. This can be useful in batched learning settings so that multiple pairs can be sampled within the same batch. A deterministic criterion would return the same item pair every time until $\theta$ is updated. We refer to the sampling criterion in \eqref{eq:bayes_crit} as BayesGURO.

For the logistic model considered in Section~\ref{sec:analysis}, using Laplace approximation with a Normal prior $\cN(0, \hess^{-1}_{B,0})$ on $\theta$, the Bayesian criterion in \eqref{eq:bayes_crit} is related to the GURO sampling criterion in \eqref{eq:inf_crit} through the first-order Taylor expansion of the variance: 
\begin{align*}
\bbV_{\theta \mid D_t}(\sigma(\theta^\top z_{ij})) & \approx (\dot\sigma(\E_{\theta \mid D_t}[\theta^\top z_{ij}]))^2 \bbV_{\theta \mid D_t}[\theta^\top z_{ij}] 
 = (\dot\sigma(\theta_{B,t}^\top z_{ij})||z_{ij}||_{\hess^{-1}_{B,t}(\theta_{B,t})})^2~,
\end{align*}
where $\theta_{B,t}$ is the MAP estimate of $\theta$ at time $t$ and $\hess_{B,t}$ is the Hessian adjusted by the prior covariance $\hess^{-1}_{B,0}$ (further described in Appendix \ref{sec:bayes_imp}). 
Thus, to a first-order approximation, for a large number of posterior samples, the GURO and BayesGURO active learning criteria are equivalent, save for the influence of the prior. In practice, we find that the Bayesian variant lends itself well to sequential updates of the posterior. The choice of prior $p(\theta)$, which could be useful under strong domain knowledge, and the stochasticity of using few posterior samples to approximate $\bbV$ make the two criteria distinct. 


\subsection{Uniform sampling}

The uniform sampling algorithm is given in Algorithm~\ref{alg:bandit_uni}. The corresponding Bayesian version replaces line 5 with the MAP estimate. 

\begin{algorithm}[ht]
\caption{Uniform sampling algorithm}\label{alg:bandit_uni}
\begin{algorithmic}[1]
\Require Training items $\cI_D$, attributes $\bX = \{x_i\}_{i \in \cI_d}$
\For{$t = 1, ..., T$}
\State Sample $(i_t,j_t)$ uniformly
\State Observe $c_t$ from noisy comparison (annotator) 
\State $D_t = D_{t-1} \cup \{i_t, j_t, c_t)\}$ 

    \State {Let $\theta_t = \mathrm{MLE}(D_t)$}\
\EndFor
\State Return $h_T$
\end{algorithmic}
\end{algorithm}

\subsection{BALD}

\begin{algorithm}[ht]
\caption{BALD bandit}\label{alg:bandit_bald}
\begin{algorithmic}[1]
\Require Training items $\cI_D$, attributes $\bX = \{x_i\}_{i \in \cI_d}$
\State Initialize $\theta_{B,0} = \mathbf{0}$, $\hess_{B,0} = \lambda^{-1} I$ 
\For{$t = 1, ..., T$}

\State Draw $(i_t,j_t)$ = $\argmax_{i, j} H[y \mid z_{i,j}, D_{t-1}] - \E_{\mathbf{\theta} \sim p(\mathbf{\theta} \mid D_{t-1})}[H[y\mid z_{i,j}, \mathbf{\theta}]]$

\State Observe $c_t$ from noisy comparison (annotator) 
\State $D_t = D_{t-1} \cup \{i_t, j_t, c_t)\}$ 
\State Let $\theta_t = \mathrm{MAP}(D_t)$
\State Update $\bH_{B,t} \leftarrow \hess_{B,0} + \sum_{(i,j) \in D_t} \dot{\sigma}(z_{i,j}^\top \theta_t)z_{i,j} z_{i,j}^\top$
\EndFor
\State Return $h_T$
\end{algorithmic}
\end{algorithm}

Where the posterior is calculated as in Appendix \ref{sec:bayes_imp} and $H[y \mid z_{i,j}, D_{t-1}] - \E_{\mathbf{\theta} \sim p(\mathbf{\theta} \mid D_{t-1})}[H[y\mid z_{i,j}, \mathbf{\theta}]]$ is approximated as in Appendix \ref{sec:bald_der}.

\subsubsection{Deriving the BALD sampling criterion}
\label{sec:bald_der}
The BALD criteria formalized using our notation becomes

$$
\argmax_{i,j} H[y \mid z_{i,j}, D_t] - \E_{\mathbf{\theta} \sim p(\mathbf{\theta} \mid D_t)}[H[y\mid z_{i,j}, \mathbf{\theta}]],
$$

where $H$ represents Shannon's entropy

$$
h(p) = -p\log_2(p) - (1-p)\log_2(1-p).
$$

The first term of the equation becomes

$$
H[y \mid z_{ij}, D_t] = h(\Pr (y \mid z_{i,j}, D_t )) = h \left(\int \Pr (y \mid z_{i,j}, \theta) \Pr(\theta \mid D_t) d\theta \right).
$$

Here $\Pr (y \mid z_{i,j}, D_t )$ is the predictive distribution for our Bayesian logistic regression model. As covered in \citet[Chapter 4]{bishop2006pattern}, this expectation cannot be evaluated analytically but can be approximated using the probit function $\Phi$;

$$
\Pr (y \mid z_{ij}, D_t ) \approx \Phi \left(\frac{\mathbf{\theta}^\top_t z_{i,j}}{\sqrt{\lambda^{-2} + ||z_{ij}||^2_{\hess^{-1}_t}}}\right) \approx \sigma\left(\frac{\mathbf{\theta}^\top_t z_{i,j}}{\sqrt{1 + \frac{\pi ||z_{ij}||^2_{\hess^{-1}_t(\theta_*)}}{8}}}\right).
$$

Next, the term $\E_{\mathbf{\theta} \sim p(\mathbf{\theta} \mid D_t)}[H[y\mid z_{i,j}, \mathbf{\theta}]]$ must be calculated. The true definition is

$$
\E_{\mathbf{\theta} \sim p(\mathbf{\theta} \mid D_t)}[H[y\mid z_{i,j}, \mathbf{\theta}]] = \int h(\sigma(\mathbf{\theta}^\top z_{i,j})) \mathcal{N}(\theta \mid \theta_t, \hess_t^{-1}) d\theta.
$$

To make this a one variable integral, let $X = \theta^\top z_{i,j}$ define a new random variable. Since $\theta \sim \mathcal{N}(\theta_t, \hess_t^{-1})$, and $z_{i,j}$ is just a constant vector, we know that $X$ will follow a univariate normal distribution $X \sim \mathcal{N}(\theta_t^\top z_{i,j}, ||z_{ij}||^2_{\hess^{-1}_t})$. This allows us to rewrite the integral as

$$
\int h(\sigma(\mathbf{\theta}^T \mathbf{z})) \mathcal{N}(\theta \mid \theta_t, \hess_t^{-1}) d\theta = \int h(\sigma(x)) \mathcal{N}(\theta_t^\top z_{i,j}, ||z_{ij}||^2_{\hess^{-1}_t}) dx.
$$

However, this integral has no closed form solution. Instead we perform the same strategy as in \citet{houlsby_bayesian_2011} and do a Taylor expansion of $\ln{h(\sigma(\mathbf{\theta}^\top \mathbf{z}))}$. The third-order Taylor expansion gives us

$$
h(\sigma(x)) \approx \exp{\left(- \frac{x^2}{8\ln 2}\right)}.
$$

Inserting this, the term can be approximated as

\begin{equation*}
\begin{split}
\int h(\sigma(x)) \mathcal{N}(x \mid \theta_t^\top z_{i,j}, ||z_{ij}||^2_{\hess^{-1}_t}) dx & \approx \int \exp{\left(- \frac{x^2}{8 \ln{2}} \right)} \mathcal{N}(x \mid \theta_t^\top z_{i,j}, ||z_{ij}||^2_{\hess^{-1}_t}) dx \\
& = \frac{C}{\sqrt{||z_{ij}||^2_{\hess^{-1}_t} + C^2}} \exp{\left(-\frac{(\theta_t^\top z_{i,j})^2}{2(||z_{ij}||^2_{\hess^{-1}_t} + C^2)} \right)},
\end{split}
\end{equation*}

where $C = \sqrt{4 \ln{2}}$. Finally, we arrive at an estimation of the objective function we wish to maximize:

\begin{align*}
    H[y \mid z_{i,j}, D_t] - \E_{\mathbf{\theta} \sim p(\mathbf{\theta} \mid D_t)}[H[y\mid z_{i,j}, \mathbf{\theta}]] \approx & \ h \left(\sigma\left(\frac{\theta_t^\top z_{i,j}}{\sqrt{1 + \frac{\pi}{8} ||z_{ij}||^2_{\hess^{-1}_t}}}\right)\right)\\
    & - \frac{C}{\sqrt{||z_{ij}||^2_{\hess^{-1}_t} + C^2}} \exp{\left(-\frac{(\theta_t^\top z_{i,j})^2}{(||z_{ij}||^2_{\hess^{-1}_t} + C^2)} \right)}
\end{align*}

\newpage
%
%
\section{Proofs of Lemma~\ref{lm:concentration} and Theorem~\ref{thm:upper_bound}}\label{app:theory}

\subsection{Proof of Lemma~\ref{lm:concentration}}
\begin{proof}
 We now proceed to bound \begin{align*}
    P\left(|\sigma(z_{ij}^\top \theta_t) - \sigma(z_{ij}^\top \theta_*)|  > \Delta\right).
\end{align*}

From the self-concordant property of logistic regression we have~\citep{faury2020improved} \begin{align*}
    |\sigma(z_{ij}^\top \theta_t) - \sigma(z_{ij}^\top \theta_*)| \leq \dot \sigma(z_{ij}\top \theta_t)|z_{ij}^\top (\theta_t - \theta_*)| + \frac{1}{4}|z_{ij}^\top (\theta_t - \theta_*)|^2.
\end{align*}

We will prove a high probability bound on the event \begin{align}\label{eq:objective}
 \dot \sigma(z_{ij}\top \theta_t)|z_{ij}^\top (\theta_t - \theta_*)| + \frac{1}{4}|z_{ij}^\top (\theta_t - \theta_*)|^2 \leq \Delta.
\end{align}
Directly trying to bound the LHS in Equation~\ref{eq:objective} will result in a rather messy expression. Instead, we define the events \begin{align*}
    \mathcal{E}_1 &:= \left\{\dot \sigma(z_{ij}\top \theta_t)|z_{ij}^\top (\theta_t - \theta_*)|  \leq \frac{\Delta}{2} \right\} \\
    \mathcal{E}_2 &:= \left\{ \frac{1}{4}|z_{ij}^\top (\theta_t - \theta_*)|^2  \leq \frac{\Delta}{2} \right\}. 
\end{align*}
Clearly $\mathcal{E}_1  \bigcup \mathcal{E}_2 $ implies the expression in Equation~\ref{eq:objective}. Assume we have bounds on the complement of these events, $P\left(\mathcal{E}_1^c\right) \leq \alpha$ and $P\left(\mathcal{E}_2^c\right) \leq \beta$. Then \begin{align*}
    P\left(|\sigma(z_{ij}^\top \theta_t) - \sigma(z_{ij}^\top \theta_*)|  > \Delta \right) &\leq \alpha + \beta + \alpha \beta  \\
    &\leq 2 \alpha + 2 \beta.
\end{align*}

We now proceed to bound the probability of these complements separately. 

{\bf Step 1. Relating $\theta_t$ to $\theta_*$:}
    The first challenge in our analysis to is relate $\theta_*$ and $\theta_t$. In contrast to linear regression, where we have a closed-form expression for $\theta_t$, there is no analytical solution for $\theta_t$ given a set of observation. However, we know that $\theta_t$ is the MLE, corresponding to \begin{align*}
    \theta_t = \argmax_{\theta} L_t(\theta)
\end{align*}
where
\begin{align*}
    L_t(\theta) = \sum_{s=1}^t c_s \log \sigma\left(z_s^\top \theta \right) + (1-c_s) \log \left(1 - \sigma\left(z_s^\top \theta \right) \right).
\end{align*}
We have \begin{align*}
    \nabla_\theta L_t(\theta) = \sum_{s=1}^t c_s z_s - \underbrace{\sum_{s=1}^t \sigma\left(z_s^\top \theta \right) z_s}_{g_t(\theta)}
\end{align*}
and hence $g_t(\theta_t) = \sum_{s=1}^t c_s z_s $.

A standard trick in logistic bandits~\citep{Filippi10, faury2020improved, jun21a} is to relate $\theta_*-\theta_t$ to $g_t(\theta_*) - g_t(\theta_t)$. Especially, the following equality is due to the mean-value theorem (see \citet{Filippi10}) \begin{align}\label{eq:log_eq}
    g_t(\theta_*) - g_t(\theta_t) = \hess_t(\theta')\left(\theta_*-\theta_t \right)
\end{align} 
where $\theta'$ is some convex combination of $\theta_*, \theta_t$. Note that $\hess_t(\theta')$ has full rank. 

Using Equation~\ref{eq:log_eq} yields 
\begin{align*}
   \left| z_{ij}^\top \left(\theta_* - \theta_t\right)\right| &= \left|z_{ij}^\top \hess^{-1}_t(\theta') \left(g_t(\theta_*) - g_t(\theta_t) \right) \right| \\
\end{align*}

Furthermore, since $g_t(\theta_t) = \sum_{s=1}^t c_s z_s $, due to $\nabla_\theta L_t(\theta_t) = 0$, we have \begin{align*}
    g_t(\theta_t)  - g_t(\theta_*) = \sum_{s=1}^t \underbrace{\left(c_s - \sigma\left(z_s^\top \theta_* \right) \right)}_{\epsilon_s}z_s
\end{align*}
where $\epsilon_s$ is a sub-Gaussian random variable with mean $0$ and variance $\nu_s^2 := \dot \sigma \left(z_s^\top \theta_* \right)$. We define \begin{align*}
    S_t := \sum_{s=1}^t \epsilon_s z_s.
\end{align*}

We now have \begin{align*}
    \left| z_{ij}^\top \left(\theta_* - \theta_t\right)\right| &=  \left|z_{ij}^\top \hess^{-1}_t(\theta') S_t \right|
\end{align*}
and Lemma 10 in \citet{faury2020improved} states that $\hess_t^{-1}(\theta') \preccurlyeq (1 + 2S) \hess_t^{-1}(\theta_*)$ where $||\theta_*||_2 \leq S$. Hence, \begin{align*}
    \left| z_{ij}^\top \left(\theta_* - \theta_t\right)\right| \leq \left(1 + 2S\right) \left|z_{ij}^\top \hess^{-1}_t(\theta_*) S_t \right|
\end{align*}

{\bf Step 2. Tail bound for vector-valued martingales:}

We will now prove an upper bound on the probability that $\left|z_{ij}^\top \hess^{-1}_t(\theta_*) S_t \right|$ deviates much from a certain threshold. This step is based on the proof of Lemma 1 in \citet{Filippi10} which itself is based on a derivation of a concentration inequality in \citet{rusmevichientong2010linearly}. The difference compared to \citet{Filippi10} is that we work with the Hessian $\hess_t(\theta_*)$ instead of the design matrix for linear regression $V_t=\sum_s x_sx_s^\top$. This require us to construct a slightly different martingale. 

Let $A$ and $B$ are two random variables such that \begin{align}\label{eq:pena_req}
    \E\left [ \exp \left\{\gamma A - \frac{\gamma^2}{2}B^2  \right\} \right] \leq 1, \forall \gamma \in \mathbb{R}
\end{align}
then due to Corollary 2.2 in \citet{pena2004} it holds that $\forall a \geq \sqrt{2}$ and $b > 0$ \begin{align}
    P \left(|A| \geq a \sqrt{(B^2 + b) \left(1 + \frac{1}{2} \log \left(\frac{B^2}{b} + 1 \right) \right)} \right) \leq \exp \left\{\frac{-a^2}{2} \right\}.
\end{align}
Let $\eta \in \mathbb{R}^d$ and consider the process
\begin{align}\label{eq:martingale}
    M^\gamma_t(\theta_*, \eta) := \exp \left \{ \gamma \eta^\top S_t - \gamma^2 ||\eta||^2_{\hess_t(\theta_*)} \right \}.
\end{align}
We will now proceed to prove that $M^\gamma_t(\theta, \eta)$ is a non-negative super martingale satisfying Equation~\ref{eq:pena_req}. Note that \begin{align*}
   \gamma \eta^\top S_t - \gamma^2 ||\eta||^2_{\hess_t(\theta_*)} =\sum_{s=1}^t\underbrace{\left(  \gamma \eta^\top z_s \epsilon_s - \dot \sigma (\theta^\top z_s) \gamma^2 \left(\eta^\top z_s\right)^2 \right)}_{F_s} = \sum_{s=1}^t F_s.
\end{align*}

Further we use the fact that $\epsilon_s$ is sub-Gaussian with parameter $\nu_s$, .i.e, \begin{align*}
    \E \left[  \exp\{\lambda \epsilon_s\} \right] \leq \exp\left\{\nu_s^2 \lambda^2 \right\}, \forall \lambda > 0.
\end{align*}
Let $D_{s-1}$ denote the observations up until time $s$, then 
\begin{align*}
  \E \left[  \exp\{F_s\} \mid D_{s-1} \right]&= \E\left[\exp\left\{\underbrace{\gamma \eta^\top z_s}_{\lambda} \epsilon_s\right \} \right] \exp\left\{- \underbrace{\dot \sigma (\theta_t^\top z_s)}_{\nu_s^2} \gamma^2 \left(\eta^\top z_s\right)^2 \right \} \\
  &\leq \exp\left\{\nu_s^2 \lambda^2 \right\} \exp\left\{-\nu_s^2 \lambda^2 \right\} = 1.
\end{align*}
This also implies\begin{align*}
    \E \left[ M^\gamma_t(\theta_*, \eta) \mid D_{t-1} \right] \leq  M^\gamma_{t-1}(\theta_*, \eta)
\end{align*}
and $M^\gamma_t(\theta_*, \eta)$ is a super-martingale satisfying 
 \begin{align*}
    \E \left[\exp\left\{\gamma \eta^\top S_t - \gamma^2 ||\eta||^2_{\hess_t(\theta_*)}   \right\} \right] \leq 1, \forall \gamma \geq 0
\end{align*}
and we can apply the results of \citet{pena2004}.

We now follow the last step of the proof of Lemma 1 in \citet{Filippi10}.
We let $a=\sqrt{2 \log \frac{1}{\delta}}$ for some $\delta \in (0, 1/e)$ and let $b = \lambda_0 \|\eta\|^2_2$.  We have with probability at least $1-\delta$ \begin{align*}
    |\eta^\top S_t| \leq  \sqrt{2\log \frac{1}{\delta}} \sqrt{\|\eta\|^2_{\hess_t(\theta_*) + \lambda_0 \|\eta\|^2_2} \left(1 + \frac{1}{2} \log\left( 1 + \frac{\|\eta\|^2_{\hess_t(\theta_*)}}{\lambda_0 \|\eta\|^2_2} \right) \right)}. 
\end{align*}
Rearanging and using the fact that $\lambda_0 ||\eta||^2_2 \leq \|\eta\|^2_{\hess_t(\theta_*)} \leq t \|\eta\|_2$ yields \begin{align}\label{eq:xs_conc}
    |\eta^\top S_t| \leq  \rho(\lambda_0)||\eta||_{\hess_t(\theta_*)}  \sqrt{2 \log \frac{t}{\delta}}.
\end{align}
where $\rho$ is defined as \begin{align*}
    \rho(\lambda_0) = \sqrt{3 + 2 \log \left(1 + \frac{4Q^2}{\lambda_0}\right)}. 
\end{align*}

We take $M_t$ to be a matrix such that $M_t^2 = \hess_t(\theta_*)$ and note that for any $\tau > 0$ \begin{align*}
P\left(||S_t||_{\hess^{-1}_t(\theta_*)}^2 \geq d \tau^2 \right) \leq  \sum_{i=1}^d P\left( \left|S_t^\top M_t^{-1} e_i \right| \geq \tau  \right)
\end{align*}
where $e_i$ is the i:th unit vector. Equation~\ref{eq:xs_conc} with $\eta=M_t^{-1} e_i$ together with $||M_t^{-1} e_i||_{\hess_t(\theta_*)}=1$ yield that the following holds with with probability at least $1- \delta$ \begin{align}\label{eq:s_bound}
    ||S_t||_{\hess_t^{-1}(\theta_*)} \leq \rho(\lambda_0) \sqrt{2 d \log t} \sqrt{\log \frac{d}{\delta}}.
\end{align}

{\bf Step 3. (Unverifiable) High-probability bounds on $\mathcal{E}_1$ and $\mathcal{E}_2$.} \\

We now have enough machinery to state high-probability bounds for our two events. These bounds will be \emph{unverifiable} in the sense that the depend on the true parameter $\theta_*$ which is not known to us during runtime. We derive verifiable bounds in the next step of the proof.

Recall that $\hess_t^{-1}(\theta_*)$ is symmetric. We apply Equation~\ref{eq:xs_conc} with $\eta= \hess_t^{-1}(\theta_*)z_{ij}$ and $\alpha>0$ in place of $\delta$. First, we note that $\| \hess_t^{-1}(\theta_*) z_{ij}\|_{\hess_t(\theta_*)} = \|z_{ij}\|_{\hess^{-1}_t(\theta_*)}$ which implies with probability at least $1-\alpha$ \begin{align}\label{eq:event}
    \left|z_{ij}^\top \hess_t^{-1}(\theta_*)S_t \right| =  \left|S_t^\top \hess_t^{-1}(\theta_*)z_{ij}\right| \leq \rho(\lambda_0) \|z_{ij}\|_{\hess^{-1}_t(\theta_*)} \sqrt{2 \log \frac{t}{\alpha}}.
\end{align}
We solve for smallest possible $\alpha \in (0, 1/e)$ such that \begin{align*}
  (1 + 2S)  \rho(\lambda_0) \dot \sigma(z_{ij}^\top \theta_*) ||z_{ij}||_{\hess^{-1}_t(\theta_*)} \sqrt{2 \log \frac{t}{\alpha}} \leq \frac{\Delta}{2}
\end{align*}

Rearanging yields \begin{align}\label{eq:true_bound_e1}
    \alpha \leq \exp\left \{ \frac{- \Delta^2}{8 \rho^2(\lambda_0) (1 + 2S)^2 \left ( \dot \sigma(z_{ij}^\top \theta_*) ||z_{ij}||_{\hess^{-1}_t(\theta_*))}\right )^2} + \log T \right \}.
\end{align}

For $\cE_2$ and the bound on its probability, $\beta>0$ we have  
\begin{align*}
  \frac{1}{4} |z_{ij}^\top (\theta_t - \theta_*)|^2 \leq \frac{1}{2}(1 + 2S)^2  ||z_{ij}||^2_{\hess^{-1}_t(\theta_*)} \rho^2(\lambda_0) \log \frac{t}{\beta} \leq \frac{\Delta}{2}
\end{align*}
and \begin{align}\label{eq:true_bound_e2}
    \beta \leq \exp\left \{ \frac{- \Delta}{\rho^2(\lambda_0) (1 + 2S)^2 \left ( ||z_{ij}||_{\hess^{-1}_t(\theta_*))}\right )^2} + \log T \right \}.
\end{align}

Note that both Equation~\ref{eq:true_bound_e1} and Equation~\ref{eq:true_bound_e2} are under the assumption that the RHS satisfy $< 1/e$ since this is required in order to apply the results of \citet{pena2004}. As we discuss in the main text, these quantities are approaching zero as $O(Te^{-T})$, ignoring various constants, for reasonable sampling strategies and will satisfy this condition eventually. 

{\bf Step 4. (Verifiable) High-probability bounds on $\mathcal{E}_1$ and $\mathcal{E}_2$.} \\

The bounds in the previous step depend on the true parameter $\theta_*$ which we do not have access to in practise. We again use Lemma 10 of \citet{faury2020improved} together with Cauchy-Schwartz \begin{align*}
    |z_{ij}(\theta_* - \theta_t)| &= \left|z_{ij}^\top \hess^{-1/2}_t(\theta') \hess^{1/2}_t(\theta') S_t \right| \\
    &\leq  (1 + 2S) ||z_{ij}||_{\hess^{-1}_t(\theta_t)} ||S_t||_{\hess^{-1}_t(\theta_*)}.
\end{align*}

Using Equation~\ref{eq:s_bound} we have with probability at last $1-\alpha$ \begin{align}\label{eq:to_solve}
    (1 + 2S) \sigma(z_{ij}^\top \theta_*)  ||z_{ij}||_{\hess^{-1}_t(\theta_t)} ||S_t||_{\hess^{-1}_t(\theta_*)} \leq (1 + 2S) \dot \sigma(z_{ij}^\top \theta_*) ||z_{ij}||_{\hess^{-1}_t(\theta_t)} \rho(\lambda_0) \sqrt{2 d \log t} \sqrt{\log \frac{d}{\alpha}}.
\end{align}

We solve for smallest $\alpha \in (1/e)$ such that Equation~\ref{eq:to_solve} is smaller than $\Delta_{ij}/2$. This yields \begin{align*}
    \alpha \leq  \exp\left\{ \frac{-\Delta^2}{8d \rho^2 (\lambda_0)(1+2S)^2\left ( \dot \sigma(z_{ij}^\top \theta_T) ||z_{ij}||_{\hess^{-1}_t(\theta_T))}\right )^2} + \log dT \right \}.
\end{align*}

Same steps for $\beta$ yields \begin{align*}
    \beta \leq \exp\left \{ \frac{- \Delta}{d\rho^2 (\lambda_0)(1 + 2S)^2 \left ( ||z_{ij}||_{\hess^{-1}_t(\theta_T))}\right )^2} + \log dT \right \}.
\end{align*}

For brevity, define $C_1= \rho^2 (\lambda_0)(1 + 2S)^2$. 

Using the definition of $\Tilde{\hess}_t$ yields the statement of Lemma~\ref{lm:concentration}.
\end{proof}

\subsection{Proof of Theorem~\ref{thm:upper_bound}}

\begin{proof}
We let $i \succ j$ denote that $i$ is preferred to $j$. W.l.o.g assume $1 \succ 2 \succ ... \succ n$. The key observation is that
for any $i$ and $j$ such that $i < j$ it holds that \begin{align*}
    \Delta_{i, j} > (j-i) \Delta_*.
\end{align*} 
If we get the wrong relation between $i, j$ then $\sigma(z_{ij}^\top \theta_*) - \sigma(z_{ij}^\top \theta_T)>(j-i)\Delta_*$. Lemma~\ref{lm:concentration} implies  \begin{align*}
    P(\sigma(z_{ij}^\top \theta_*) -& \sigma(z_{ij}^\top \theta_T)>(j-i)\Delta)  \leq  dT (\underbrace{\exp\left\{ \frac{-(j-i)\Delta^2}{8d \rho^2(\lambda_0)(1+2S)^2\left ( \dot \sigma(z_{ij}^\top \theta_T) ||z_{ij}||_{H^{-1}_t(\theta_T))}\right )^2} \right \}}_{\alpha_{ij}^{j-i}}  \\
    &+ \underbrace{ \exp\left \{ \frac{- (j-i)\Delta}{d \rho(\lambda_0) (1 + 2S)^2 \left ( ||z_{ij}||_{H^{-1}_t(\theta_T))}\right )^2}\right \}}_{\beta_{ij}^{j-i}}).
\end{align*}

Let $R(\theta_T)$ be the ordering error of the $n$ items. Then, under a uniform distribution over items we have \begin{align}\label{eq:r_t_bound}
    \E[R(\theta_T)] &\leq \frac{4dT}{n(n-1)} \left( \underbrace{\sum_{i=1}^{n-1} \sum_{j=i+1}^{n} \alpha_{ij}^{j-i}}_{A} + \underbrace{\sum_{i=1}^{n-1} \sum_{j=i+1}^{n} \beta_{ij}^{j-i}}_{B} \right)
\end{align}

$A$ and $B$ will be upper bounded using the same argument. We now upper bound sum $A$

Let $\alpha_* := \exp\left\{ \frac{- \Delta_*^2}{8dC_1 \max_{i, j}\dot \sigma(z_{ij}^\top \theta_T)||z_{ij}||^2_{\hess^{-1}_t(\theta_*)}} \right\} $ then \begin{align*}
    A  &\leq \sum_{i=1}^{n-1} \sum_{j=i+1}^{n} \alpha_*^{j-i} \leq  (n-1)\left(\sum_{j=0}^{n} \alpha_*^{j} -1 \right) \\
    &\leq (n-1)\left(\frac{1}{1-\alpha_*} -1\right).
\end{align*}
This follows from the definition of $\delta_{1, *}$ and properties of the geometric sum. It is easy to see that $\frac{1}{1 - e^{-x}} - 1 = \frac{1}{e^{x} - 1}$. Hence, \begin{align*}
    \frac{4dT}{n(n-1)}A  \leq \frac{4dT}{n} \left(\alpha_*^{-1} -1 \right)^{-1}.
\end{align*}
For $B$ we perform the same steps with $\beta_* := \exp\left\{ \frac{- \Delta_*}{dC_1 \max_{i, j}||z_{ij}||^2_{\hess^{-1}_t(\theta_*)}} \right\} $ to get \begin{align*}
    \frac{4dT}{n(n-1)}A  \leq \frac{4dT}{n} \left(\beta_*^{-1} -1 \right)^{-1}.
\end{align*}
Combing yields
and \begin{align*}
     \E\left[R(\theta_T)\right] &\leq \frac{4dT}{n} \left(\left(\alpha_*^{-1} -1 \right)^{-1} + \left(\beta_*^{-1} -1 \right)^{-1} \right)
\end{align*}
By Markov's inequality we have \begin{align}
    P(R(\theta_T) \geq \epsilon) \leq \frac{4dT}{\epsilon n} \left(\left(\alpha_*^{-1} -1 \right)^{-1} + \left(\beta_*^{-1} -1 \right)^{-1} \right).
\end{align}
\end{proof}

\subsection{Extensions of current theory}
\label{app:theory_ext}
\paragraph{Regularized estimators.} In our analysis in Section~\ref{sec:analysis}, we have assumed that $\theta_T$ is the maximum likelihood estimate and that $\hess(\theta_T)$ has full rank. This can be relaxed by considering $\ell_2$ (Ridge) regularization where $\theta_{\lambda_0, T}$ is the optimum of the regularized log-likelihood with regularization $\lambda_0 \mathbf{I}$ and $\hess_{\lambda_0}(\theta_{\lambda_0, T}) = \sum_{s=1}^T \dot \sigma(z_s^\top \theta_{\lambda_0, T})z_sz_s^\top + \lambda_0 \mathbf{I}$. The same machinery used to prove Lemma~\ref{lm:concentration}~\citep{Filippi10, faury2020improved} can be applied to this regularized version with small changes to the final bound. 

\paragraph{Generalized linear models.} It is also possible to derive similar results for generalized linear models with other link functions, $\mu(z_{ij}^\top \theta_*)$, by using the general inequality $\hess(\theta) \geq \kappa^{-1} \mathbf{V}$ with $\mathbf{V}=\sum_{s=1}^T z_{s}z_{s}^\top$ and $\kappa \geq 1/\min_{z_{ij}} \dot \mu(z_{ij}^\top \theta_*)$. We conjecture that this will yield a scaling of $\sim \exp(-\Delta^2T/\kappa)$ where, unfortunately, $\kappa$ might be very large. For a more thorough discussion on the dependence on $\kappa$ in generalized linear bandits, see \citet[Chapter 19]{lattimore_szepesvari_2020}.

\paragraph{Lower and algorithm-specific upper bounds on the ordering error.} A worst-case lower bound on the ordering error can be constructed in the fixed-confidence setting, where the goal is to minimize the number of comparisons until a correct ordering is found with a given confidence, by following~\citet{garivier2016optimal}. This involves defining the set of \emph{alternative} models $\mathrm{Alt}(\theta_*)$ which differs from $\theta_*$ in their induced ordering of $\cI$. The bound is then constructed by optimizing the frequency of comparisons of each pair of items so that such alternative models are distinguished as much as possible from the true parameter.  We have left this result out of the paper as we find it uninformative in the regime when the number of comparisons is small, (see \citet{pmlr-v65-simchowitz17a} for a discussion on the limitations of these asymptotic results in the standard bandit setting). Constructing a lower bound for our fixed-budget setting, of learning as good an ordering as possible with a fixed number of comparisons, is much more challenging. The fixed-confidence result yields \emph{a} bound for the fixed-budget case~\citep{garivier2016optimal}, but constructing either a tight lower bound or a tight algorithm-specific upper bound is an open problem~\citep{Fang2022FixedBudgetPE}.

\newpage
\section{Comparison with regret minimization}
\label{app:dueling}
\citet{bengs2022stochastic} considered a problem formulation where the goal is to learn a parameter $\theta$ which determines the utility $Y_{i,t}$ for a set of arms $i=1, ..., n$ as a function of observed context vectors $x_{i,t}$ in a sequence of rounds $t=1, ..., T$, 
$$
Y_{i,t} = \theta^\top X_{i,t}~.
$$
The probability that item $i$ is preferred over $j$ (denoted $i \succ j$) in round $t$ is decided through a comparison function $F$, 
$$
\Pr(i \succ j \mid X_{i,t}, X_{j,t}) = F(Y_{i,t} - Y_{j,t})~.
$$
The goal in their setting is to, in each round, select two items $(i_t,j_t)$ so that their maximum (or average) utility is as close as possible to the utility of the best item. The expected regret in their average-utility setting is
$$
\Re_{BSH} = \E[\sum_{t=1}^T 2Y_{i_t^*,t} - Y_{i_t,t} - Y_{j_t,t}]~.
$$

\begin{thmprop}[Informal] An algorithm which achieves minimal regret in the setting of \citet{bengs2022stochastic} can perform arbitrarily poorly in our setting. 
\end{thmprop}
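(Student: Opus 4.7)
The plan is to exhibit a family of instances in which any algorithm whose cumulative regret $\Re_{BSH}$ matches the minimax-optimal rate is forced to concentrate its comparisons in a way that leaves the ordering of the remaining items essentially unidentified, while GURO (or uniform sampling) would order them correctly with high probability. The overall architecture mirrors classical separations between pure exploration and regret minimization in standard bandits, adapted to the dueling/contextual preference setting.

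First, I would construct the bad instance. Take $n \geq 3$ items with context vectors $x_1, \dots, x_n \in \mathbb{R}^d$ and a parameter $\theta_*$ arranged so that $y_1 = \theta_*^\top x_1$ exceeds $\max_{i \geq 2} y_i$ by a large fixed gap $G$, while the remaining scores $y_2, \dots, y_n$ are packed into an interval of width $\delta$ with $\delta \ll G$ and with a fixed minimum pairwise gap $\Delta_* > 0$ between them. Under the logistic comparison model \eqref{eq:comparison_lr}, item $1$ is preferred to every other item with probability close to one, and comparisons among $\{2, \dots, n\}$ are highly informative (margin $\Delta_*$) but contribute essentially nothing to lowering $\Re_{BSH}$.

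Second, I would argue that any algorithm with $o(T)$ regret must play the pair $(1,1)$ (or, more generally, pairs with at least one item having near-optimal utility) on all but a vanishing fraction $o(T)$ of rounds. This follows directly from the definition of $\Re_{BSH}$: any round in which both selected items lie in $\{2, \dots, n\}$ contributes at least $2G - 2\delta = \Omega(G)$ to the regret, so the number of such rounds is at most $\Re_{BSH}/\Omega(G) = o(T)$. A minimax-optimal algorithm under \citet{bengs2022stochastic} therefore collects only $o(T)$ informative comparisons among items $\{2, \dots, n\}$, and in particular, for the majority of pairs $(i,j)$ with $i,j \geq 2$, no comparison is ever observed.

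Third, I would lower-bound the ordering error on such an instance. Since $\{y_2, \dots, y_n\}$ live in an interval of width $\delta$ with pairwise gap $\Delta_*$, any hypothesis $h_\theta$ built from an estimator $\theta$ that has not been informed by comparisons within $\{2, \dots, n\}$ must disagree with the true comparison function on a constant fraction of pairs $(i,j)$ with $i,j \geq 2$; coupling this with the fact that a $(1-o(1))$ fraction of rounds were spent on item $1$, the expected number of such inverted pairs is $\Omega(n^2)$, yielding $\E[R_\cI(h)] = \Omega(1)$ uniformly in $T$. In contrast, Theorem~\ref{thm:upper_bound} applied to GURO gives $\E[R_\cI(h_{\theta_T})] \to 0$, which establishes the claimed arbitrarily large gap.

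The main obstacle is to make the implication ``$o(T)$ regret $\Rightarrow$ insufficient information about $\{2,\dots,n\}$'' rigorous under an adversary that may still allocate its $o(T)$ informative rounds cleverly. To handle this I would either (i) tune $\delta$ so small that the number of samples needed to correctly order $\{2, \dots, n\}$ under the logistic noise is $\omega(\Re_{BSH}/G)$, forcing a contradiction, or (ii) invoke a standard two-hypothesis change-of-measure argument between $\theta_*$ and a perturbation $\theta_*'$ that swaps two adjacent items in $\{2,\dots,n\}$ while leaving item $1$ dominant, so that the same regret-minimizing policy cannot distinguish the two instances with constant probability.
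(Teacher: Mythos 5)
Your high-level plan is the same as the paper's: a regret minimizer is forced to concentrate its duels on (near-)optimal arms and therefore never learns how to order the rest. But two steps do not go through as stated. First, in your instance the premise ``any algorithm with $o(T)$ regret'' is vacuous: you separate item $1$ from \emph{everything} by a gap $G$, so even the best distinct pair $(1,j)$ incurs average-utility regret $\approx G$ per round and every algorithm suffers $\Omega(GT)$ regret. You need \emph{two} near-optimal arms close to each other and separated from the rest, so that the benchmark pair has near-zero regret and low regret genuinely forces concentration on that pair; this is the configuration the paper uses. Second, and more importantly, in the \emph{contextual} setting the issue is not how many rounds land on pairs inside $\{2,\dots,n\}$ but whether the rounds the algorithm \emph{does} play leak information about the parameter directions that determine the ordering of $\{2,\dots,n\}$. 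All items share one $\theta_*$, so duels of the top pair are still logistic observations of $z^\top\theta_*$ and can in principle identify the whole ordering. Your construction fixes only the scores, not the geometry of the contexts, so your change-of-measure remedy is not yet an argument: the perturbation $\theta'_*$ that swaps two items of $\{2,\dots,n\}$ may change $z^\top\theta$ on the frequently played pair, making the KL between the two instances large. (There is also a degrees-of-freedom issue: with $n \gg d$ you cannot swap two scores while holding all others fixed by moving $\theta$ alone.)

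The paper closes exactly this hole with one extra ingredient: it arranges the contexts so that the feature $X(k)$ that determines the ordering of the remaining arms takes the \emph{same value} on the two top arms. The repeatedly played top pair then has $z(k)=0$, its outcomes carry zero information about $\theta_*(k)$, and the ordering of the rest is unidentifiable from the data a low-regret algorithm collects; the error in the ordering objective then grows with $n$. If you add this orthogonality condition to your instance---make $\theta_*-\theta'_*$ orthogonal to the difference vectors of all low-regret pairs---your two-hypothesis argument becomes valid and in fact yields a more quantitative version of the paper's four-sentence sketch. Without it, the central step has a genuine gap.
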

\begin{proof}
The optimal choice of arm pair in the BSH setting is the optimal and next-optimal arm $(i_t^*, i'_t)$ such that $i_t^* \succ i'_t \succ j$ for any other arms $j$. Assume that the ordering of all other arms $j$ is determined by a feature $X_{j,t}(k)$ but that $X_{i_t^*,t}(k) = X_{i'_t,t}(k)$. Then, no knowledge will be gained about arms other than the top 2 choices under the BSH regret. As the number of arms grows larger, the error in our setting grows as well.
\end{proof}

\citet{saha2021optimal} study the same average-utility regret setting and give a lower bound under Gumbel noise. \citet{saha2022efficient} investigated where there is a computationally efficient algorithm that achieves the derived optimality guarantee.

\newpage
%
%
\section{Experiment details}
\label{app:exp_details}

For BayesGURO and BALD, the posterior $p(\theta \mid D_t)$ is estimated using the Laplace approximation as described in \citet[Chapter 4]{bishop2006pattern}. With this approximation, the covariance matrix is the same as the inverse of the Hessian of the log-likelihood. For both methods, the priors $\theta_{B,0} = \mathbf{0}^d$ and $\hess_{B,0}^{-1} = I_d$ were used, and sequential updates were performed every iteration. The sample criterion for BALD under a logistic model is given in Appendix~\ref{sec:bald_der}. For BayesGURO, 50 posterior samples were used to estimate $\hat{\bbV}_{\theta \mid D_t}[\sigma(\theta^T z_{ij})]$ for every $z_{ij}$. The hybrid algorithms follow the same structure with the added constraint that each per-item parameter $\zeta_i$ is independent of other parameters. This allows for efficient updates of $\hess_{B,t}^{-1}$ by using sparsity in the covariance.

GURO, CoLSTIM, and Uniform use LogisticRegression from Scikit-learn~\citep{scikit-learn} with default Ridge regularization ($C=1$) and the lbfgs optimizer. The former two updates $\theta_t$ every iteration using the full history, $D_t$ in all experiments except for IMDB-WIKI-SbS, where GURO updates $\theta_t$ every 25th iteration. This caused no noticeable change in performance as GURO still updates $\hess^{-1}_t$ every iteration using the Sherman-Morrison formula. Note that when using the Sherman-Morrison formula in practice, you only get an estimate of $\hess^{-1}_t(\theta_t)$ since previous versions have been calculated using older estimates of $\theta$. This method for approximating the inverse hessian is covered in \citet[Chapter 5]{bishop2006pattern} and when we compared it to calculating $\hess^{-1}_t(\theta_t)$ from scratch every iteration we observed that the methods performed equally. 
The design matrix for CoLSTIM is updated as in \citet{bengs2022stochastic}: the confidence width $c_1$ was chosen to be $\sqrt{d \log (T)}$, and the perturbed values were generated using the standard Gumbel distribution.

To increase computational efficiency for the large IMDB-WIKI-SbS dataset, the hybrid algorithms did not evaluate all $\sim 100\ 000$ comparisons at every time step. Instead, a subset of $5\ 000$ comparisons was first sampled, and the highest-scoring pair in this set was chosen. This resulted in a large speed-up and no noticeable change in performance during evaluation.

\subsection{Datasets}

\label{app:datasets}

\paragraph{ImageClarity} Data available at 
\href{https://dbgroup.cs.tsinghua.edu.cn/ligl/crowdtopk}{\texttt{https://dbgroup.cs.tsinghua.edu.cn/ligl/crowdtopk}}. This dataset contained differently distorted versions of the same image. To extract relevant features, we used a ResNet34 model \citep{he_deep_2016} that had been pre-trained on Imagenet \citep{deng2009imagenet}. After PCA projection feature dimensionality was reduced to $d = 63$. The dataset consisted of $100$ images and $27\ 730$ comparisons. Since the type of distortion is the same for all images, the dataset has a true ordering with regards to the strength of the distortion applied. 

\paragraph{WiscAdds} Data available at \href{https://dataverse.harvard.edu/dataset.xhtml?persistentId=doi:10.7910/DVN/0ZRGEE}{\texttt{ https://dataverse.harvard.edu/dataset.xhtml?persistentId= doi:10.7910/DVN/0ZRGEE}} (license: CC0 1.0). The WiscAdds dataset, containing $935$ political texts, has been extended with $9\ 528$ pairwise comparisons by \citet{carlson_pairwise_2017}. In comparisons, annotators have stated which of two texts has a more negative tone toward a political opponent. To extract general features from the text, sentences were embedded using the pre-trained all-mpnet-base-v2 model from the Sentence-Transformers library \citep{reimers-2019-sentence-bert}. After applying PCA to the sentence embeddings, each embedding had a dimensionality of $d = 162$.

\paragraph{IMDB-WIKI-SbS} Data available at \href{https://github.com/Toloka/IMDB-WIKI-SbS}{\texttt{https://github.com/Toloka/IMDB-WIKI-SbS}} (license: CC BY). IMDB-WIKI-SbS consists of close-up images of actors of different ages. For each comparison, the label corresponds to which of two people appears older. The complete dataset consists of $9\ 150$ images and $250\ 249$ comparisons, but images that were grayscale or had a resolution lower than $160\times160$ were removed, resulting in 6 072 images and 110 349 comparisons. We extract features from each image using the Inception-ResNet implemented in FaceNet \citep{schroff_facenet_2015} followed by PCA, resulting in $d=75$ features per image.

\subsection{Additional figures}

\subsubsection*{X-RayAge}
\label{sec:norm_min}

To highlight the importance of the first-order term in Lemma~\ref{lm:concentration}, we evaluated NormMin on the same X-ray ordering task as in Figure \ref{fig:x_ray_train}. The results, shown in Figure \ref{fig:norm_min}, indicate that not only does the algorithm perform worse than GURO, but is seemingly also outperformed by a uniform sampling strategy. Furthermore, for completeness, we include Figure \ref{fig:gen_r_id} which shows the in-sample error, $R_{I_D}$, during the generalization experiment.

\begin{figure}
    \centering
    \begin{subfigure}[htbp]{0.48\linewidth}
        \centering
        \includegraphics[width=1\linewidth]{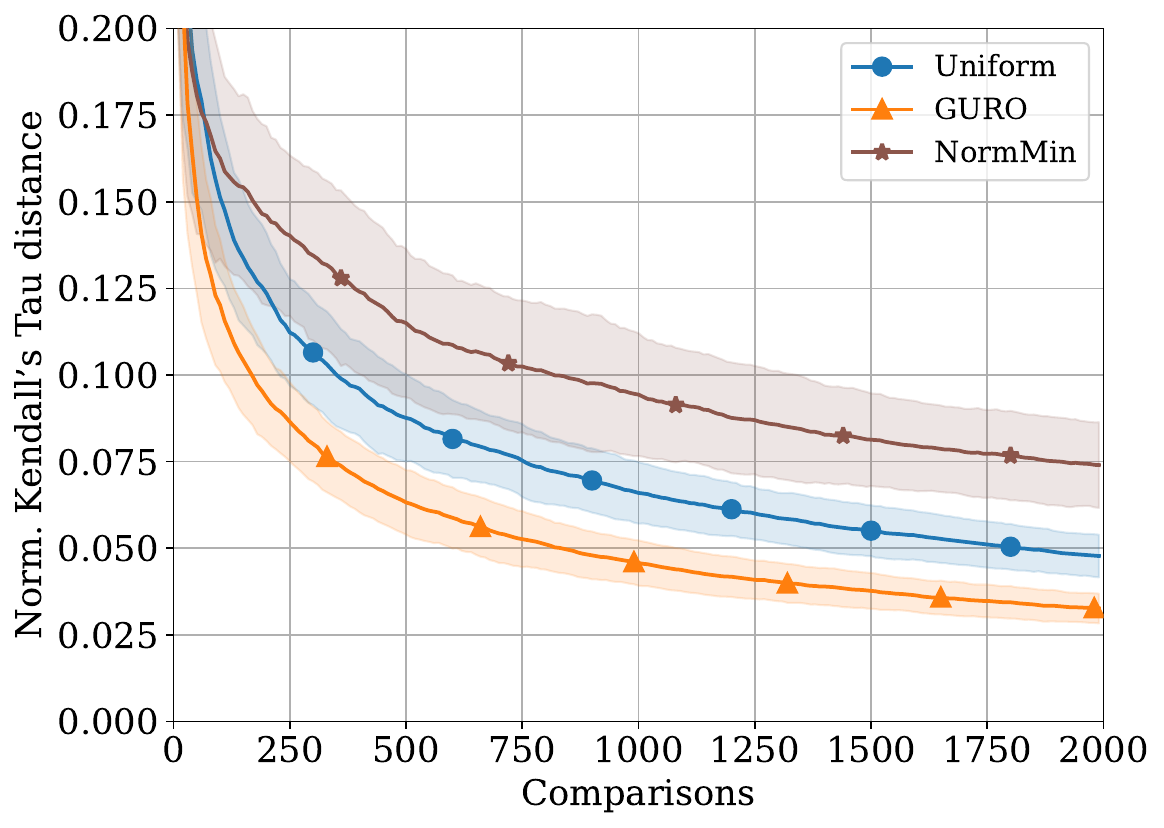}
        \caption{\textbf{X-RayAge.} NormMin included in the experiment shown in Figure \ref{fig:x_ray_train}.}
        \label{fig:norm_min}
    \end{subfigure}
    \quad
    \begin{subfigure}[htbp]{0.48\linewidth}
        \centering
        \includegraphics[width=1\linewidth]{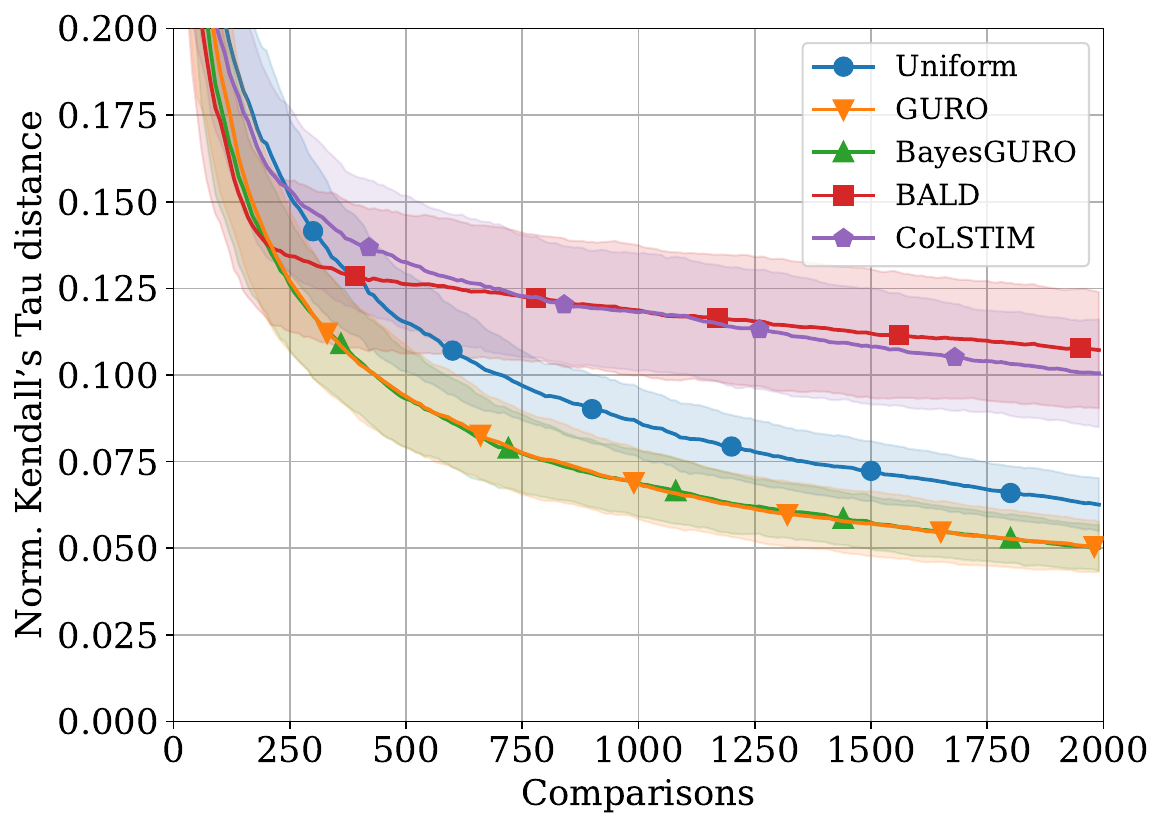}
        \caption{\textbf{X-RayAge.} The in-sample error $R_{I_D}$ for the generalization experiment performed in Figure \ref{fig:x_ray_test_set}.}
        \label{fig:gen_r_id}
    \end{subfigure}
    \caption{Additional figures from the \textbf{X-RayAge} experiment.}
    \label{fig:xray_additional}
\end{figure}

\subsubsection*{Synthetic Example and Illustration of Upper Bound}
\label{sec:synth_exp}

In this setting, $100$ synthetic data points were generated. Each data point consisted of $10$ features, where the feature values were sampled according to a standard normal distribution. The true model, $\theta_*$, was generated by sampling each value uniformly between $-3$ and $3$. The pairwise comparison feedback was simulated the same way as in Section \ref{exp:x_ray_age}, with $\lambda = 0.5$. The upper bound of the probability that $R(\theta_t) \geq 0.2$ was calculated every iteration according to Theorem \ref{thm:upper_bound}. Each algorithm was run for 2000 comparisons, updating every 10th, the results of which can be seen in Figure \ref{fig:toy_example}. We observe in Figure \ref{fig:upper_bounds} that our greedy algorithms are seemingly the fastest at minimizing the upper bound. The order of performance follows the same trend as in the experiments of Section \ref{sec:experiments}.

\begin{figure}[H]
\centering
    \begin{subfigure}[htbp]{0.48\linewidth}
        \centering    \includegraphics[width=1\linewidth]{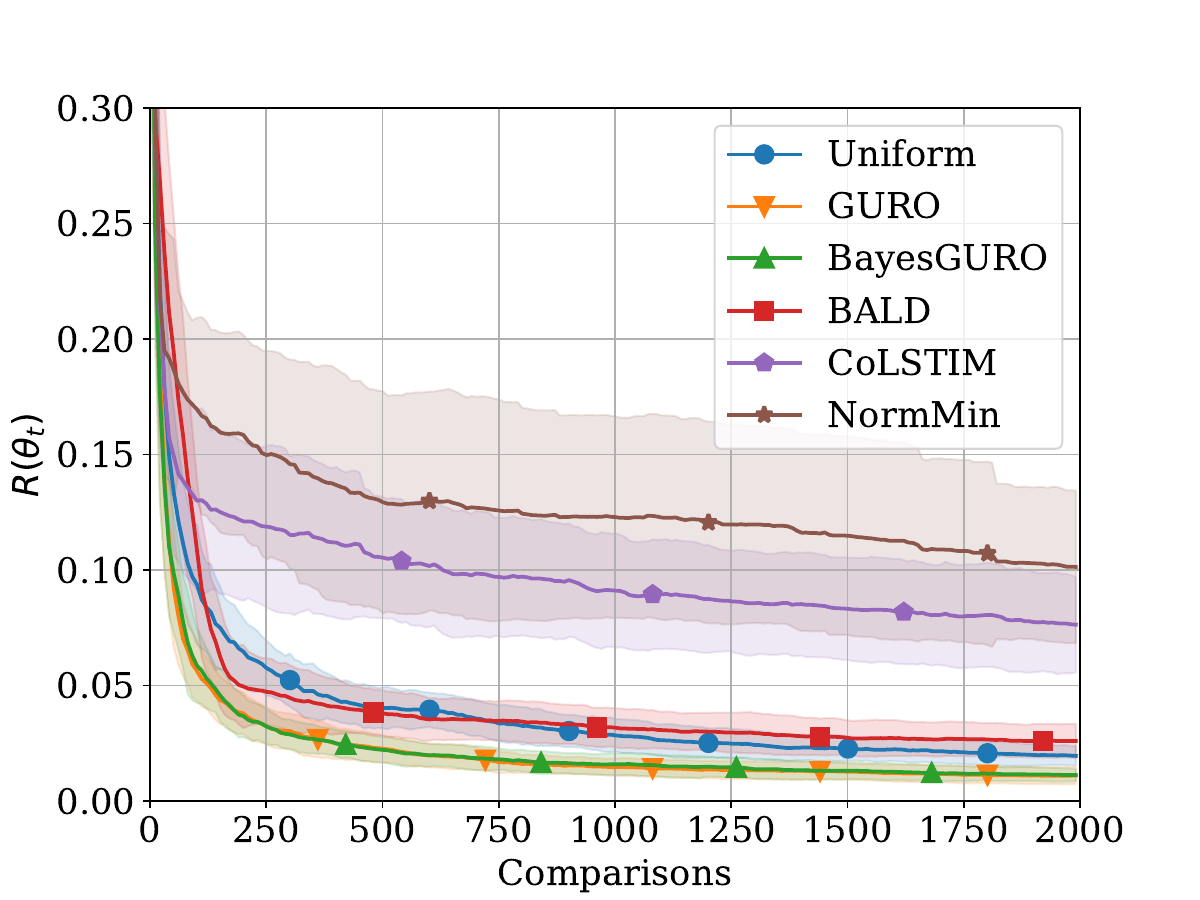}
        \caption{The risk $R(\theta_t)$, defined as the normalized Kendall's tau distance between estimated and true orderings.}
        \label{fig:toy_loss}
    \end{subfigure}%
\quad \begin{subfigure}[htbp]{0.48\linewidth}
        \centering    \includegraphics[width=1\linewidth]{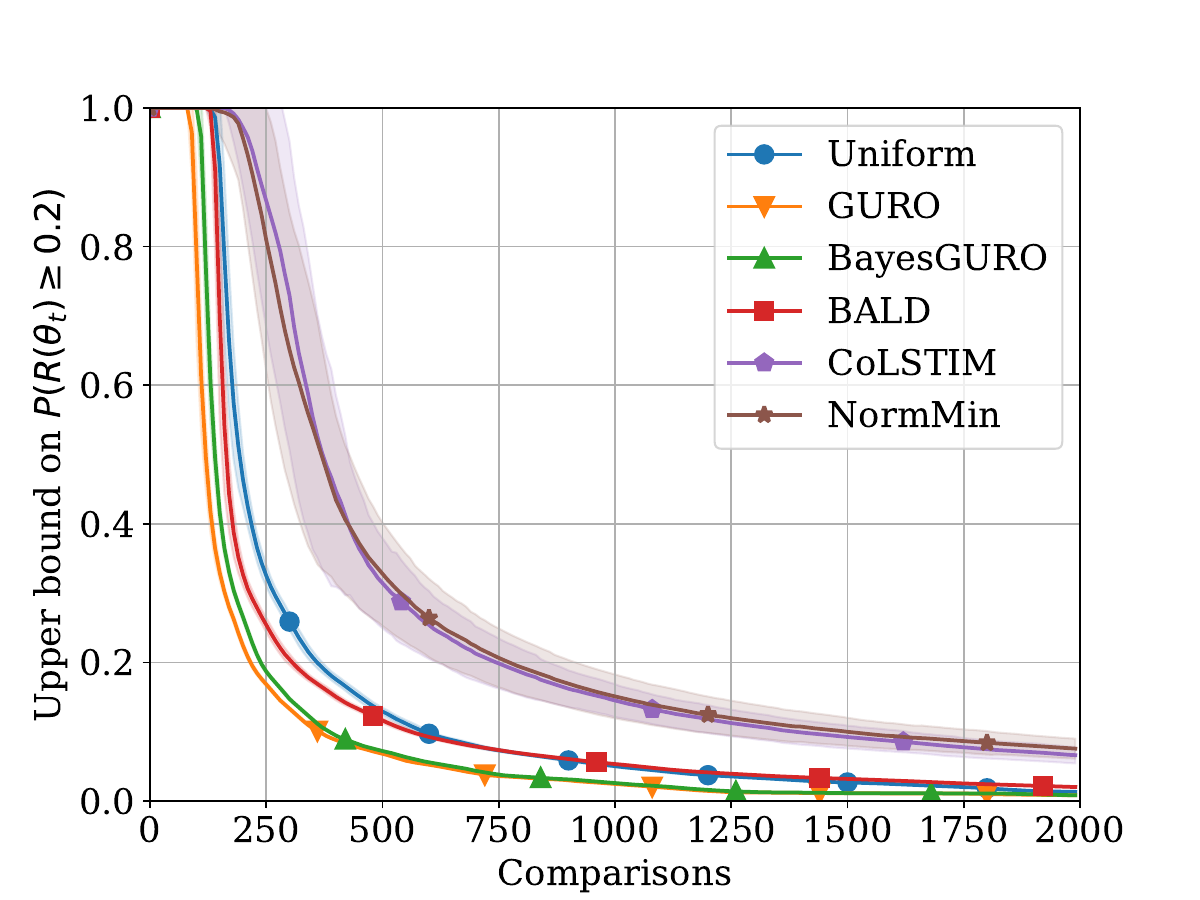} 
\caption{The probability that the frequency of pairwise inversions is $\geq$ $20 \%$ after every comparison, according to (\ref{thm:upper_bound}).}    
        \label{fig:upper_bounds}
    \end{subfigure} %
    \caption{The loss (left) along with the upper bound (right) when ordering a list of size 100 in a synthetic environment. The results have been averaged over 50 seeds.}
     \label{fig:toy_example}
\end{figure}

\subsubsection*{Randomly initialized representation}
\label{sec:random_w}

As discussed in Section \ref{sec:exp_photoage}, the performance of our contextual approach will depend on the quality of the representations. To underscore the practical usefulness of our algorithms, we have performed the same experiment as in Figure \ref{fig:real_data}, but this time the model used to extract image features was untrained (i.e., the weights were random). As to be expected, the results, shown in Figure \ref{fig:worst_case}, demonstrate that the fully contextual algorithms have no real way of ordering the items according to these uninformative features. However, GURO Hybrid performs similarly to TrueSkill, despite model misspecification. This is promising, since you may not know in advance how informative the extracted features will be for the target ordering task.

\begin{figure}[H]
    \centering
    \includegraphics[width=0.5\linewidth]{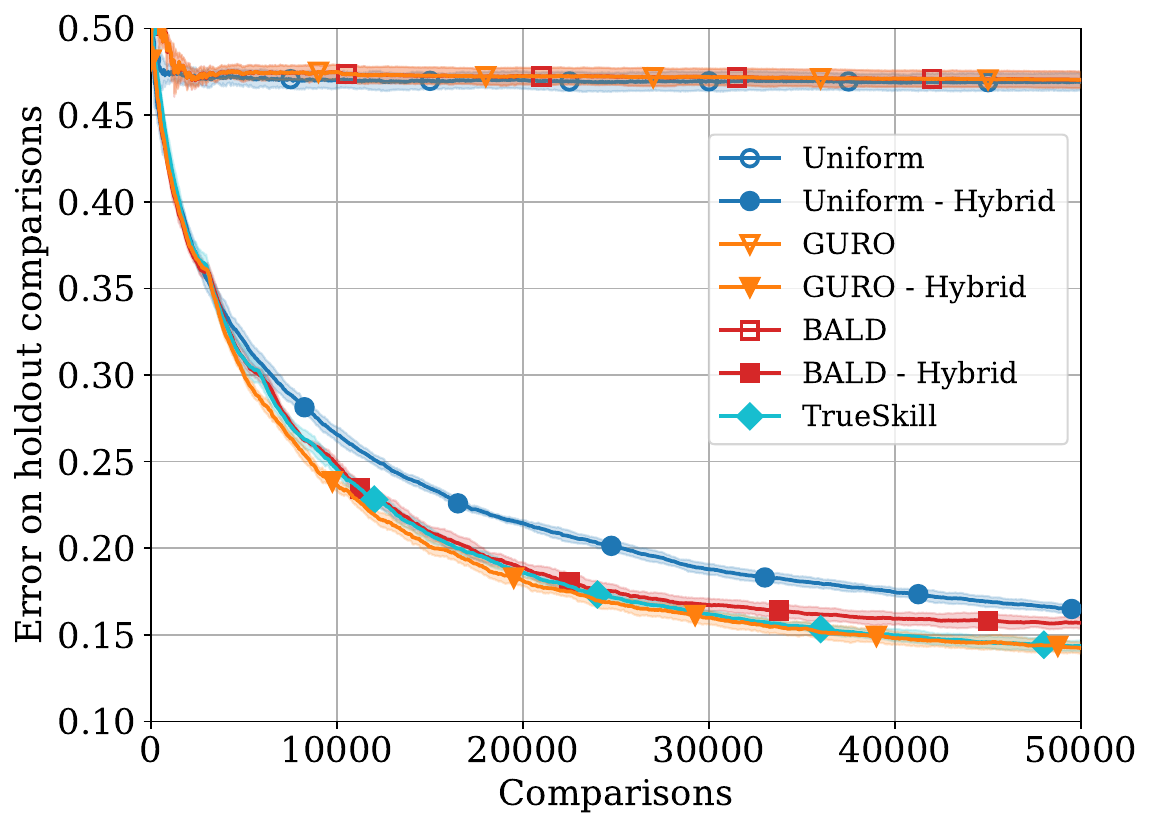}
    \caption{\textbf{IMDB-WIKI-SbS.} The same experiment as presented in Figure. \ref{fig:real_data}, but the model used for feature extraction is untrained. }
    \label{fig:worst_case}
\end{figure}

\subsubsection*{ImageClarity ground truth}
\label{sec:gt_dist}

The ImageClarity dataset consists of multiple versions of the \emph{same image}, with the \emph{same distortion} applied to it to varying degrees. Due to this artificial construction, the pairwise comparisons should, given enough samples, reflect the magnitudes of the applied distortions. In Figure \ref{fig:gt_dist} we perform the same experiment as in Figure \ref{fig:distortion}, but instead of evaluating on a holdout comparison set, we measure the distance to the ground-truth ordering. The overall results are very similar, although we do see a slight increase in the performance of contextual algorithms compared to the non-contextual TrueSkill.

\begin{figure}[H]
    \centering
    \includegraphics[width=0.5\linewidth]{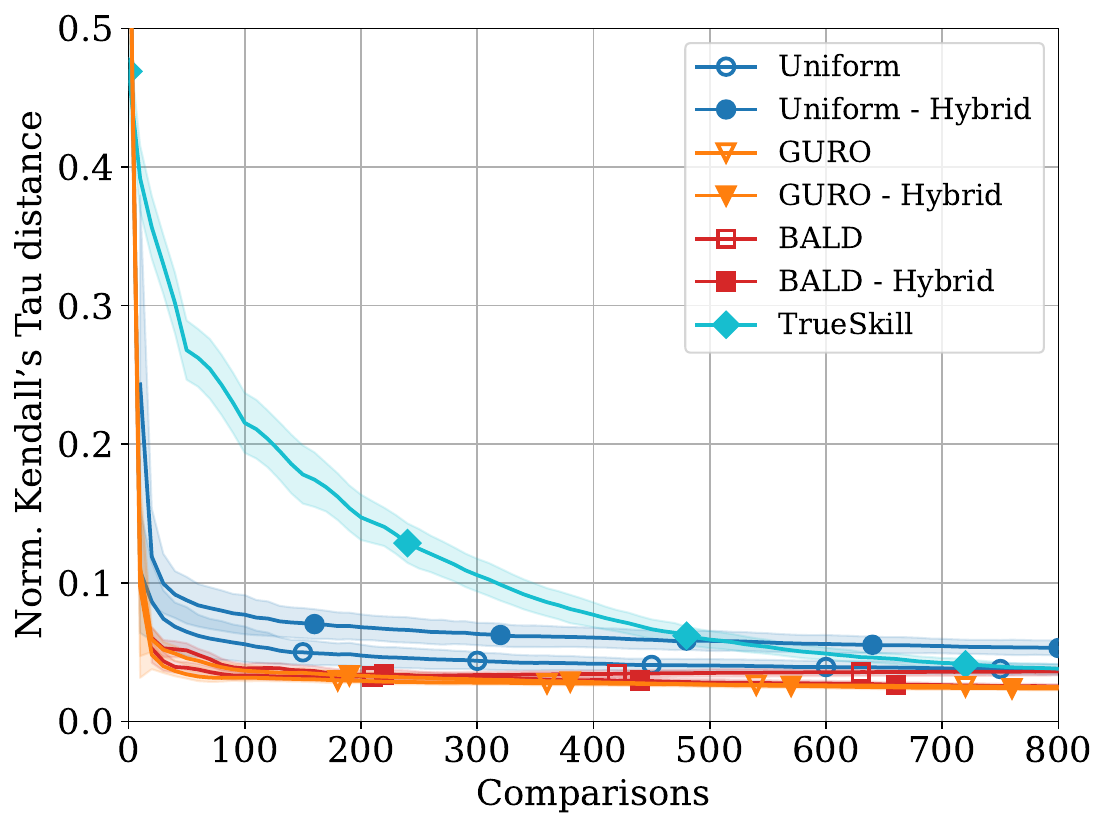}
    \caption{\textbf{ImageClarity.} Same experiment as in Figure \ref{fig:distortion}, but now measuring the distance to the ground-truth ordering. Averaged over 25 seeds along with the 1-sigma error region.}
    \label{fig:gt_dist}
\end{figure}

\subsubsection*{Ground truth ordering using the Bradley-Terry model}
\label{sec:bt_counter}

An alternate approach to evaluate ordering quality is to estimate a ''ground-truth'' ordering by applying the popular Bradley-Terry (BT) model \citep{bradley1952rank} to all available comparisons. We used the CrowdKit library~\citep{CrowdKit} to find the MLE scores for each item and ordered the elements accordingly. In Figure \ref{fig:exp_bt} we run the same experiments as in Figure \ref{fig:real_experiments}, but instead measure the distance to the constructed BT ordering. The overall trends remain, but for (\subref{fig:sentence_bt}) and (\subref{fig:age_bt}) there is a slight shift for the later iterations. More specifically we see non-contextual TrueSkill eventually overtaking the contextual algorithms. 

\begin{figure*}[t]
    \centering
    \includegraphics[width=0.9\columnwidth]{fig/legend_new.pdf}
    \centering
    \begin{subcaptionblock}[t]{0.49\columnwidth}
        \centering
        \includegraphics[width=0.99\columnwidth]{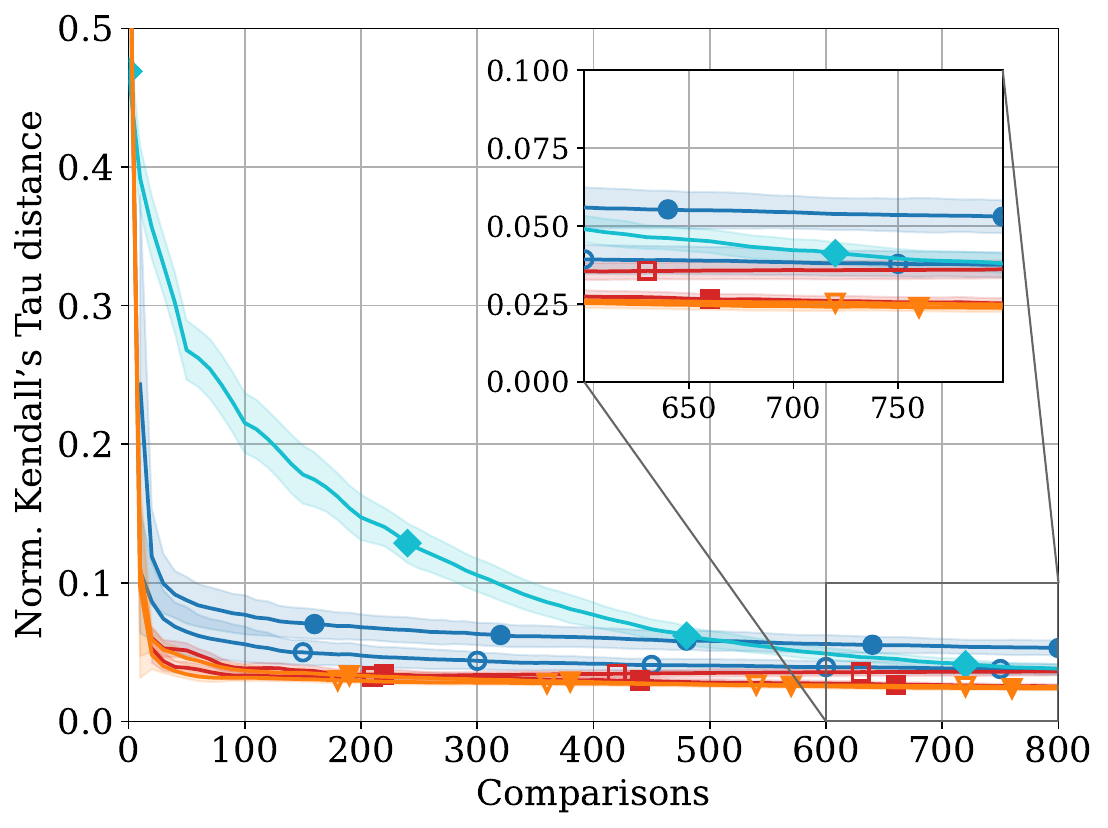}
        \caption{\textbf{ImageClarity.} $n = 100$, $d = 63$.}
        \label{fig:distortion_bt}
    \end{subcaptionblock}%
    \begin{subcaptionblock}[t]{0.49\columnwidth}
        \centering
        \includegraphics[width=0.99\columnwidth]{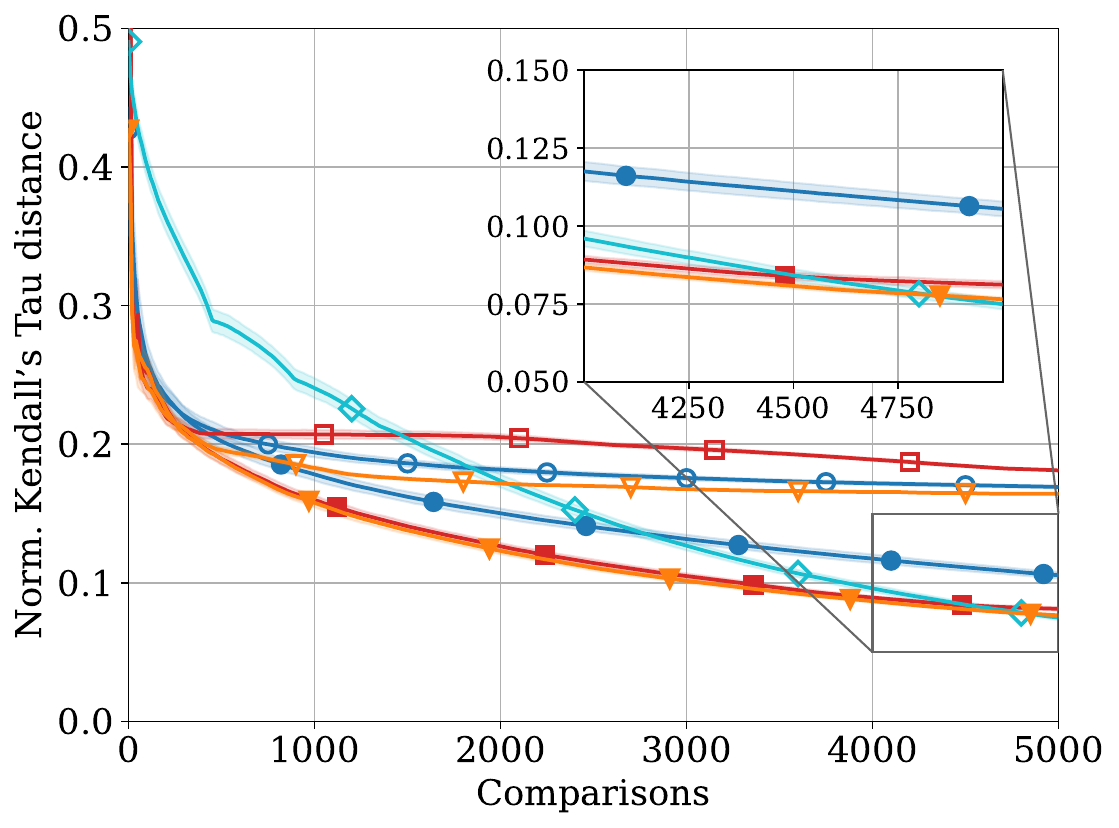}
        \caption{\textbf{WiscAds.} $n = 935$, $d = 162$.}
        \label{fig:sentence_bt}
    \end{subcaptionblock}%
    
    \begin{subcaptionblock}[t]{0.49\columnwidth}
        \centering        \includegraphics[width=0.99\columnwidth]{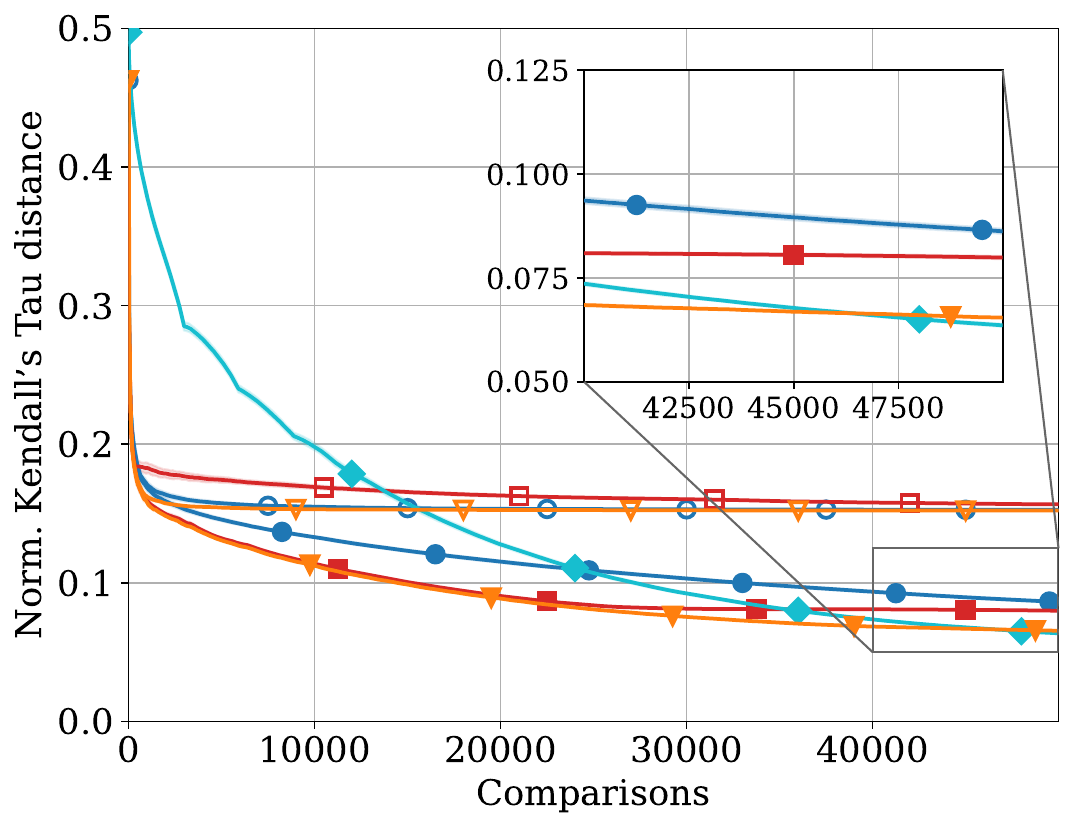}
        \caption{\textbf{IMDB-WIKI-SbS.}  $n = 6\ 072$, $d = 75$.}
        \label{fig:age_bt}
    \end{subcaptionblock}%
    \caption{The same experiment as presented in Figure. \ref{fig:real_data}, but we instead measure the distance to a ground-truth estimated using all available comparisons.}
    \label{fig:exp_bt}
\end{figure*}

The issue is that algorithms with orderings closer to the 
maximum likelihood estimate of the BT model will be favored. To exemplify this we use the ImageClarity dataset since it contains the largest number of comparisons relative to the number of items. We sample $1\ 000$ comparisons and let this be the collection that is available to the algorithms. We further construct two target orderings, one from the BT estimate using the sampled subset of comparisons, and a second, more probable ordering, from the BT estimate using all $27\ 730$ available comparisons. Figure \ref{fig:bt_counter} shows the distance between the GURO and TS algorithms and the different target orderings, where dashed lines indicate the distance to the ordering generated using the full list of comparisons. If we only look at the distance to the ordering produced using our subset of comparisons, TrueSkill seemingly outperforms GURO after about $350$ comparisons. However, if we instead measure the distance to the more probable ordering, we see that GURO converges toward a lower distance. Note that these are the same orderings, evaluated against different targets. This is likely the effect we observe in Figure \ref{fig:sentence_bt} and \subref{fig:age_bt}, but not in Figure \ref{fig:distortion_bt} as a result of the high amount of comparisons available to us.
 
\begin{figure}[H]
    \centering
    \includegraphics[width=0.5\linewidth]{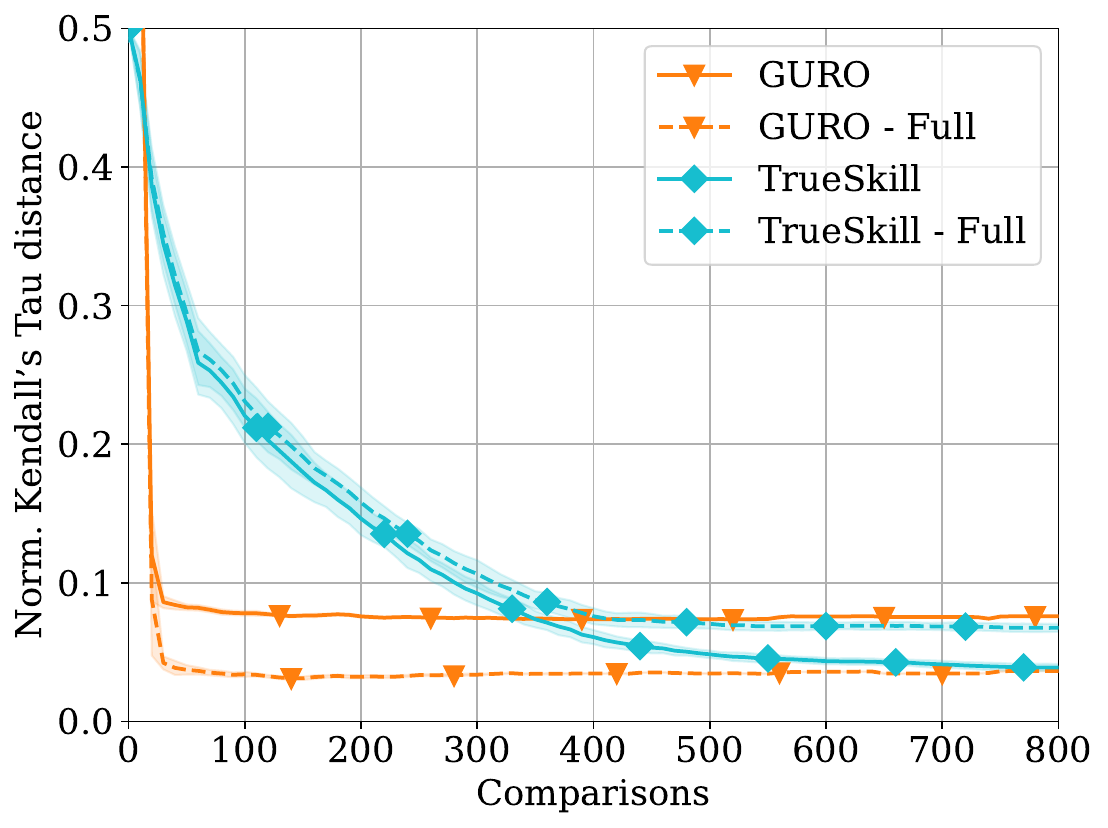}
    \caption{\textbf{ImageClarity.} The same experiment as presented in Figure \ref{fig:distortion}, but we instead measure the distance to target orderings that correspond to the maximum likelihood estimate of the BT model using different numbers of comparisons. The dashed lines show the distance to the BT estimate using all $27\ 730$ comparisons.}
    \label{fig:bt_counter}
\end{figure}

\end{document}